\newcommand{\nsamp}{N} % # samples
\newcommand{\tensordim}{T} % # tensor size
\newcommand{\sigdim}{p}
\newcommand{\coeff}{\boldsymbol{\alpha}}
\newcommand{\sig}{\mathbf{s}}
\newcommand{\OP}{\mathbf{\Omega}}
\newcommand{\Db}{\mathbf{D}}
\newcommand{\omegab}{\boldsymbol{\omega}}
\newcommand{\Sig}{\mathbf{S}}
\newcommand{\Coeff}{\mathbf{A}}
\newcommand{\Tsig}{\mathcal{S}}
\newcommand{\Tcoeff}{\mathcal{A}}
\newcommand{\Xib}{\boldsymbol{\Xi}}
\newcommand{\xib}{\boldsymbol{\xi}}
\newcommand{\Fcal}{\mathcal{F}}
\newcommand{\Xcal}{\mathcal{X}}
\newcommand{\Cfrak}{\mathfrak{C}}
\newcommand{\Ffrak}{\mathfrak{F}}
\newcommand{\Xfrak}{\mathfrak{X}}
\newcommand{\EE}{\mathbb{E}}
\newcommand{\PP}{\mathbb{P}}
\newcommand{\RR}{\mathbb{R}}
\newcommand{\empRad}[1]{\hat{R}_{\Sig}(#1)}
\newcommand{\Rad}[1]{R_{\nsamp}(#1)}
\newcommand{\empGauss}[1]{\hat{G}_{\Sig}(#1)}
\newcommand{\OB}{\operatorname{Ob}}
\newcommand{\tr}{\operatorname{tr}}
\newcommand{\vect}{\operatorname{vec}}
\newcommand{\mybr}[1]{\{#1\}}
\newtheorem{thm}{Theorem}
\newtheorem{lem}[thm]{Lemma}
\newtheorem{cor}[thm]{Corollary}
\theoremstyle{definition}
\newtheorem{defi}[thm]{Definition}
\theoremstyle{remark}
\newtheorem{nte}{Remark}
\newcommand{\coeffdim}{m}
\begin{document}
%
% paper title
% can use linebreaks \\ within to get better formatting as desired
% Do not put math or special symbols in the title.
\title{Learning Co-Sparse Analysis Operators with Separable Structures}
%
%
% author names and IEEE memberships
% note positions of commas and nonbreaking spaces ( ~ ) LaTeX will not break
% a structure at a ~ so this keeps an author's name from being broken across
% two lines.
% use \thanks{} to gain access to the first footnote area
% a separate \thanks must be used for each paragraph as LaTeX2e's \thanks
% was not built to handle multiple paragraphs
%

\author{Matthias~Seibert,~\IEEEmembership{Student Member,~IEEE,}
        Julian~W\"ormann,~\IEEEmembership{Student Member,~IEEE,}
        R\'emi~Gribonval,~\IEEEmembership{Fellow,~IEEE,}
        and Martin~Kleinsteuber,~\IEEEmembership{Member,~IEEE}% <-this % stops a space
\thanks{Copyright (c) 2015 IEEE. Personal use of this material is permitted. However, permission to use this material for any other purposes must be obtained from the IEEE by sending a request to pubs-permissions@ieee.org.}
\thanks{The first two authors contributed equally to this work. This paper was presented in part at SPARS 2015, Cambridge, UK.}
\thanks{M. Seibert, J. W{\"o}rmann, and M. Kleinsteuber are with the Department of Electrical and Computer Engineering, TU M{\"u}nchen, Munich, Germany.\newline
E-mail: \{m.seibert,julian.woermann,kleinsteuber\}@tum.de \newline
Web: http://www.gol.ei.tum.de/}% <-this % stops a space
\thanks{R. Gribonval heads the PANAMA project-team (Inria \& CNRS), Rennes, France.
E-mail: remi.gribonval@inria.fr}% <-this % stops a space
}

% note the % following the last \IEEEmembership and also \thanks - 
% these prevent an unwanted space from occurring between the last author name
% and the end of the author line. i.e., if you had this:
% 
% \author{....lastname \thanks{...} \thanks{...} }
%                     ^------------^------------^----Do not want these spaces!
%
% a space would be appended to the last name and could cause every name on that
% line to be shifted left slightly. This is one of those "LaTeX things". For
% instance, "\textbf{A} \textbf{B}" will typeset as "A B" not "AB". To get
% "AB" then you have to do: "\textbf{A}\textbf{B}"
% \thanks is no different in this regard, so shield the last } of each \thanks
% that ends a line with a % and do not let a space in before the next \thanks.
% Spaces after \IEEEmembership other than the last one are OK (and needed) as
% you are supposed to have spaces between the names. For what it is worth,
% this is a minor point as most people would not even notice if the said evil
% space somehow managed to creep in.

% The paper headers
\markboth{IEEE Transactions on Signal Processing}%
{Seibert \MakeLowercase{\textit{et al.}}: Learning Co-Sparse Analysis Operators}
% The only time the second header will appear is for the odd numbered pages
% after the title page when using the twoside option.
% 
% *** Note that you probably will NOT want to include the author's ***
% *** name in the headers of peer review papers.                   ***
% You can use \ifCLASSOPTIONpeerreview for conditional compilation here if
% you desire.

% make the title area
\maketitle

% As a general rule, do not put math, special symbols or citations
% in the abstract or keywords.
\begin{abstract}
In the co-sparse analysis model a set of filters is applied to a signal out of the signal class of interest yielding sparse filter responses. As such, it may serve as a prior in inverse problems, or for structural analysis of signals that are known to belong to the signal class. The more the model is adapted to the class, the more reliable it is for these purposes. The task of learning such operators for a given class is therefore a crucial problem. In many applications, it is also required that the filter responses are obtained in a timely manner, which can be achieved by filters with a separable structure.

Not only can operators of this sort be efficiently used for computing the filter responses, but they also have the advantage that less training samples are required to obtain a reliable estimate of the operator.

The first contribution of this work is to give theoretical evidence for this claim by providing an upper bound for the sample complexity of the learning process. The second is a stochastic gradient descent (SGD) method designed to learn an analysis operator with separable structures, which includes a novel and efficient step size selection rule. Numerical experiments are provided that link the sample complexity to the convergence speed of the SGD algorithm.
\end{abstract}

% Note that keywords are not normally used for peerreview papers.
\begin{IEEEkeywords}
Co-sparsity, separable filters, sample complexity, stochastic gradient descent
\end{IEEEkeywords}

% For peer review papers, you can put extra information on the cover
% page as needed:
% \ifCLASSOPTIONpeerreview
% \begin{center} \bfseries EDICS Category: 3-BBND \end{center}
% \fi
%
% For peerreview papers, this IEEEtran command inserts a page break and
% creates the second title. It will be ignored for other modes.
\IEEEpeerreviewmaketitle

\section{Introduction}
 \IEEEPARstart{T}{he} ability to sparsely represent signals has become standard practice in signal processing over the last decade. The commonly used synthesis approach has been extensively investigated and has proven its validity in many applications. Its closely related counterpart, the co-sparse analysis approach, was at first not treated with as much interest. In recent years this has changed and more and more work regarding the application and the theoretical validity of the co-sparse analysis model has been published. Both models assume that the signals $\sig$ of a certain class are (approximately) contained in a union of subspaces. In the synthesis model, this reads as
 \begin{equation}
 \sig \approx \Db\mathbf{x}, \quad \mathbf{x} \text{ is sparse.}
 \end{equation}
 In other words, the signal is a linear combination of a few columns of the synthesis dictionary $\Db$. The subspace is determined by the indices of the non-zero coefficients of $\mathbf{x}$.

Opposed to that is the co-sparse analysis model
 \begin{equation}
 \OP \sig \approx \coeff, \quad \coeff \text{ is sparse.}
 \end{equation}
$\OP$ is called the \emph{analysis operator} and its rows represent filters that provide sparse responses. Here, the indices of the filters with zero response determine the subspace to which the signal belongs. This subspace is in fact the intersection of all hyperplanes to which these filters are normal vectors. Therefore, the information of a signal is encoded in its zero responses. In the following, $\coeff$ is referred to as the \emph{analyzed signal}.

While analytic analysis operators like the fused Lasso \cite{tibshirani2005fused} and the finite differences operator, a close relative to the total variation \cite{rudin1992TV}, are frequently used, it is a well known fact that a concatenation of filters which is \emph{adapted to a specific class of signals} produces sparser signal responses. Learning algorithms aim at finding such an optimal analysis operator by minimizing the average sparsity over a representative set of training samples. An overview of recently developed analysis operator learning schemes is provided in Section~\ref{sec:related}.
 
Once an appropriate operator has been chosen there is a plethora of applications that it can be used for. Among these applications are regularizing inverse problems in imaging, cf.~\cite{hawe:tip13, yaghoobi2013constrained, chen2013, chen2014}, where the co-sparsity is used to perform standard task such as image denoising or inpainting,  bimodal super-resolution and image registration as presented in \cite{kiechle2014}, where the joint sparsity of analyzed signals from different modalities is minimized, image segmentation as investigated in \cite{nieuwenhuis2014co}, where structural similarity is measured via the co-sparsity of the analyzed signals, classification as proposed in \cite{shekhar2014analysis}, where an SVM is trained on the co-sparse coefficient vectors of a training set, blind compressive sensing, cf.~\cite{woermann:spl13}, where a co-sparse analysis operator is learned adaptively during the reconstruction of a compressively sensed signal,
and finally applications in medical imaging, e.g., for structured representation of EEG signals~\cite{albera2014brain} and tomographic reconstruction~\cite{pfister2014tomographic}. All these applications rely on the sparsity of the analyzed signal, and thus their success depends on how well the learned operator is adapted to the signal class.
 
An issue commonly faced by learning algorithms is that their performance rapidly decreases as the signal dimension increases.
To overcome this issue some of the authors proposed separable approaches for both dictionary learning, cf.~\cite{hawe2013separable}, and co-sparse analysis operator learning, cf.~\cite{seiwoe14separable}. These separable approaches offer the advantage of a noticeably reduced numerical complexity. 
For example, for a separable operator for image patches of size $p \! \times \! p$ the computational burden for both learning and applying the filters is reduced from $\mathcal{O}(p^2)$ to $\mathcal{O}(p)$. 
We refer the reader to our previous work in \cite{seiwoe14separable} for a detailed introduction of separable co-sparse analysis operator learning.

 In the paper at hand we show that separable analysis operators provide the additional benefit of requiring less samples during the training phase in order to learn a reliable operator. 
This is expressed via the sample complexity for which we provide a result for analysis operator learning, i.e., an upper bound $\eta$ on the deviation of the expected co-sparsity w.r.t.\ the sample distribution and the average co-sparsity of a training set.
Our main result presented in Theorem~\ref{thm:main_result_SC} in Section~\ref{sec:sc} states that $\eta \propto C/\sqrt{\nsamp}$, where  ${\nsamp}$ is the number of training samples. The constant $C$ depends on the constraints imposed on $\OP$ and we show that it is considerably smaller in the separable case.
As a consequence, we are able to provide a generalization bound of an empirically learned analysis operator.

This generalization bound plays a crucial role in the investigation of stochastic gradient descent methods. In Section~\ref{sec:stochastic} we introduce a geometric Stochastic Gradient Descent learning scheme for separable co-sparse analysis operators with a new variable step size selection that is based on the Armijo condition.
The novel learning scheme is evaluated in  Section~\ref{sec:exp}. 
 Our experiments confirm the theoretical results  on sample complexity in the sense that separable analysis operator learning shows an improved convergence rate in the test scenarios.

\section{Notation}
 Scalars are denoted by lower-case and upper-case letters $\alpha,n,N$, column vectors are written as small bold-face letters $\coeff, \sig$, matrices correspond to bold-face capital letters $\Coeff, \Sig$, and tensors are written as calligraphic letters $\Tcoeff, \Tsig$. This notation is consistently used for the lower parts of the structures. For example, the $i^\mathrm{th}$ column of the matrix $\mathbf{X}$ is denoted by $\mathbf{x}_i$, the entry in the $i^\mathrm{th}$ row and the $j^\mathrm{th}$ column of  $\mathbf{X}$ is symbolized by $x_{ij}$, and for tensors $x_{i_1 i_2 \ldots i_{\tensordim}}$ denotes the entry in $\Xcal$ with the indices $i_j$ indicating the position in the respective mode.
 Sets are denoted by blackletter script $\mathfrak{F}, \mathfrak{S}, \mathfrak{X}$.
 
 For the discussion of multidimensional signals, we make use of the operations introduced in \cite{tensor:lathauwer:2000}. In particular, to define the way in which we apply the separable analysis operator to a signal in tensor form we require the $k$-mode product.
 
 \begin{defi}
    Given the $\tensordim$-tensor $\Tsig \in \RR^{I_1 \times I_2 \times \ldots \times I_{\tensordim}}$ and the matrix $\OP \in \RR^{J_k \times I_k}$, their $k$-mode product is denoted by 
    \begin{equation*}
    \Tsig \times_k \OP.
    \end{equation*}
    The resulting tensor is of the size $I_1 \times I_2 \times \ldots \times I_{k-1} \times J_k \times I_{k+1} \times \ldots \times I_{\tensordim}$ and its entries are defined as
    \begin{equation*}
        (\Tsig \times_k \OP)_{i_1 i_2 \ldots i_{k-1} j_k i_{k+1} \ldots i_{\tensordim}}
        = \sum_{i_k = 1}^{I_k} s_{i_1 i_2 \ldots i_{\tensordim}} \cdot \omega_{j_k i_{k}}
    \end{equation*}
    for $j_k = 1,\ldots,J_k$.
 \end{defi}
The $k$-mode product can be rewritten as a matrix-vector product using the Kronecker product $\otimes$ and the $\vect$-operator that rearranges a tensor into a column vector such that
\begin{equation}
\begin{split}
\label{eq:vectorized_nmode}
    \Tcoeff &= \Tsig \times_1 \OP^{(1)} \times_2 \OP^{(2)} \ldots \times_{\tensordim} \OP^{(\tensordim)}\\
    \Leftrightarrow \vect(\Tcoeff) &= (\OP^{(1)} \otimes \OP^{(2)} \otimes \ldots \otimes \OP^{(\tensordim)}) \cdot \vect(\Tsig).
\end{split}
\end{equation}
We also make use of the mapping 
\begin{equation}\begin{split}
\label{eq:sep_mapping}
    \iota\colon &\RR^{J_1 \times I_1} \times \ldots \times \RR^{J_\tensordim \times I_ \tensordim} \to \RR^{\prod_k J_k \times \prod_k I_k}\\
    {}& (\OP^{(1)},\ldots ,\OP^{(\tensordim)}) \mapsto \OP^{(1)} \otimes \ldots \otimes \OP^{(\tensordim)}.
\end{split}\end{equation}

The remainder of notational comments, in particular those required for the discussion of the sample complexity, will be provided in the corresponding sections.

\section{Related Work}
\label{sec:related}

As we have pointed out, learning an operator adapted to a class of signals yields a sparser representation than those provided by analytic filter banks. It thus comes as no surprise that there exists a variety of analysis operator learning algorithms, which we shortly review in the following.

In \cite{rubinstein2013analysis} the authors present an adaptation of the well known K-SVD dictionary learning algorithm to the co-sparse analysis operator setting. The training phase consists of two stages. In the first stage the rows of the operator that determine the subspace that each signal resides in are determined. In the subsequent stage each row of the operator is updated to be the vector that is ``most orthogonal'' to the signals associated with it. These two stages are repeated until a convergence criterion is met.

In \cite{yaghoobi2013constrained} it is postulated that the analysis operator is a uniformly normalized tight frame, i.e., the columns of the operator are orthogonal to each other while all rows have the same $\ell_2$-norm. Given noise contaminated training samples, an algorithm is proposed that outputs an analysis operator as well as noise free approximations of the training data. This is achieved by an alternating two stage optimization algorithm. In the first stage the operator is updated using a projected subgradient algorithm, while in the second stage the signal estimation is updated using alternating direction method of multipliers (ADMM).

A concept very similar to that of analysis operator learning is called sparsifying transform learning. In \cite{ravishankar2013} a framework for learning overcomplete sparsifying transforms is presented. This algorithm consists of two steps, a sparse coding step where the sparse coefficient is updated by only retaining the largest coefficients, and a transform update step where a standard conjugate gradient method is used and the resulting operator is obtained by normalizing the rows.

The authors of \cite{chen2014} propose a method specialized on image processing. Instead of a patch-based approach, an image-based model is proposed with the goal of enforcing coherence across overlapping patches. In this framework, which is based on higher-order filter-based Markov Random Field models, all possible patches in the entire image are considered at once during the learning phase.
A bi-level optimization scheme is proposed that has at its heart an unconstrained optimization problem w.r.t.\ the operator, which is solved using a quasi-Newton method.

Dong \emph{et al.}~\cite{dong2014} propose a method that alternates between a hard thresholding operation of the co-sparse representation and an operator update stage where all rows of the operator are simultaneously updated using a gradient method on the sphere. Their target function has the form $\|\Coeff - \OP \Sig\|_F^2$, where $\Coeff$ is the sparse representation of the signal $\Sig$.

Finally, Hawe \emph{et al.}~\cite{hawe:tip13} propose a geometric conjugate gradient algorithm on the product of spheres, where analysis operator properties like low coherence and full rank are incorporated as penalty functions in the learning process.

Except for our previous work~\cite{seiwoe14separable}, to our knowledge the only other analysis operator learning approach that offers a separable structure is proposed in \cite{qi:icip13separable} for the two-dimensional setting. Therein, an algorithm is developed that takes as an input noisy 2D images $\hat{\Sig}_i,\, i=1,\ldots,\nsamp$ and then attempts to find $\Sig_i,\OP_1,\OP_2$ that minimize $\sum_{i=1}^{\nsamp} \|\Sig_i - \hat{\Sig}_i\|_F^2$ such that $\|\OP_1 \Sig_i \OP_2^\top\|_0 \leq l$, where $l$ is a positive integer that serves as an upper bound on the number of non-zero entries, and the rows of $\OP_1$ and $\OP_2$ have unit norm. This problem is solved by alternating between a sparse coding stage and an operator update stage that is inspired by the work in \cite{rubinstein2013analysis} and relies on singular value decompositions.

While, to our knowledge, there are no sample complexity results for separable co-sparse analysis operator learning, results for many other matrix factorization schemes exist. Examples for this can be found in \cite{maurer2010,vainsencher2011,gribonval2013sample}, among others. 
Specifically, \cite{gribonval2013sample} provides a broad overview of sample complexity results for various matrix factorizations. 
It is also of particular interest to our work since the sample complexity of separable dictionary learning is discussed. The bound derived therein has the form $c \sqrt{\beta \log(N) / N}$ where the driving constant $\beta$ behaves proportional to $\sum_i \sigdim_i d_i$ for multidimensional data in $\RR^{\sigdim_1 \times \ldots \times \sigdim_{\tensordim}}$ and dictionaries $\Db^{(i)} \in \RR^{p_i \times d_i}$. This is an improvement over the non-separable result where the driving constant is proportional to the product over all $i$, i.e., $\beta \propto \prod_i \sigdim_i d_i$. The argumentation used to derive the results in \cite{gribonval2013sample} is different from the one we employ throughout this paper. While the results in \cite{gribonval2013sample} are derived by determining a Lipschitz constant and then using an argument based on covering numbers and concentration of measure, we follow a different approach. Following the work in \cite{maurer2010}, we employ McDiarmid's inequality in combination with a symmetrization argument and Rademacher averages.
This approach offers better results when discussing tall matrices as in the case of co-sparse analysis operator learning.

\section{Sample Complexity}
\label{sec:sc}
Co-sparse analysis operator learning aims at finding a set of filters concatenated in an operator $\OP$ which generates an optimal sparse representation $\Coeff = \OP \Sig$ of a set of training samples $\Sig = [\sig_1,\ldots,\sig_\nsamp]$. This is achieved by solving the optimization problem
\begin{equation}
 \label{eq:def_sparsepen}
 \arg\min_{\OP \in \Cfrak} \tfrac{1}{\nsamp} \sum_{j=1}^{\nsamp} f(\OP,\sig_j)
 %g(\OP \sig_j)
\end{equation}
where $f(\OP,\sig) = g(\OP\sig) + p(\OP)$ with the sparsity promoting function $g$ and the penalty function $p$. By restricting $\OP$ to the constraint set $\Cfrak$ it is ensured that certain trivial solutions are avoided, see e.g. \cite{yaghoobi2013constrained}. The additional penalty function is used to enforce more specific constraints. We will discuss appropriate choices of constraint sets at a later point in this section while the penalty function will be concretized in Section~\ref{sec:stochastic}. 
%In the subsequent discussion the dependency on $\OP$ and $\sig$ will play a significant role. To reflect this we introduce the notation $f(\OP,\sig)\coloneqq g(\OP\sig)$.
%

Before we can provide our main theoretical result, we first introduce several concepts from the field of statistics in order to make this work self-contained.

\subsection{Rademacher \& Gaussian Complexity}
\label{sec:RuGC}
%Derivation of Rademacher Complexity
In the following, we consider the set of samples $\Sig = [\sig_1, \ldots, \sig_\nsamp]$, $\sig_i \in \Xfrak$, where each sample is drawn according to an underlying distribution $\PP$ over $\Xfrak$. Furthermore, given the above defined function $f\colon \Cfrak \times \Xfrak \to \RR$, we consider the class $\Ffrak = \{f(\OP,\cdot)\,:\, \OP \in \Cfrak\}$ of functions that map the sample space $\Xfrak$ to $\RR$.
We are interested in finding the function $f \in \Ffrak$ for which the expected value 
\begin{align*}
    \EE[f] &\coloneqq \EE_{\sig \sim \PP} [f(\OP,\sig)]
\end{align*}
is minimal. However, due to the fact that the distribution $\PP$ of the data samples is not known, in general, it is not possible to determine the optimal solution to this problem and we are limited to finding a minimizer of the empirical mean for a given set of $\nsamp$ samples $\Sig$ drawn according to the underlying distribution. The empirical mean is defined as
\begin{align*}
    \hat{\EE}_{\Sig}[f] &\coloneqq \tfrac{1}{\nsamp}\sum_{i=1}^{\nsamp} f(\OP,\sig_i).
\end{align*}

In order to evaluate how well the empirical problem approximates the expectation we pursue an approach that relies on the Rademacher complexity. We use the definition introduced in \cite{meir2003generalization}.

\begin{defi}
\label{def:empRad}
Let $\Ffrak \subset \{ f(\OP,\cdot) \,:\, \OP \in \Cfrak \}$ be a family of real valued functions defined on the set $\Xfrak$.
Furthermore, let $\Sig = [\sig_1,\ldots,\sig_\nsamp]$ be a set of samples with $\sig_i \in \Xfrak$.
%where each sample $\sig_i, i=1,\ldots,\nsamp$ is drawn according to a predefined probability distribution $\PP$. 
The \emph{empirical Rademacher complexity} of $\Ffrak$ with respect to the set of samples $\Sig$ is defined as
\begin{equation*}
    \empRad{\Ffrak} \coloneqq \EE_\sigma \left[\sup_{f \in \Ffrak} \tfrac{1}{\nsamp}\sum_{i=1}^\nsamp \sigma_i f(\OP,\sig_i)\right]
\end{equation*}
where $\sigma_1,\ldots,\sigma_{\nsamp}$ are independent Rademacher variables, i.e., random variables with $Pr(\sigma_i = +1) = Pr(\sigma_i = -1) = 1/2$ for $i=1,\ldots,\nsamp$. 
\end{defi}

%The empirical Rademacher complexity measures how well, on average, the function class $\Ffrak$ correlates with random Rademacher-noise over the set of samples $\Sig$. 
This definition differs slightly from the standard one, 
where the absolute value of the argument within the supremum is taken, cf.\ \cite{bartlett2003rademacher}. 
Both definitions coincide when $\Ffrak$ is closed under negation, i.e., when  $f \in \Ffrak$ implies $-f \in \Ffrak$. As proposed in~\cite{meir2003generalization}, the definition of the empirical Rademacher complexity as above has the property that it is dominated by the standard empirical Rademacher complexity. Furthermore, $\empRad{\Ffrak}$ vanishes when the function class $\Ffrak$ consists of a single constant function.

Definition~\ref{def:empRad} is based on a fixed set of training samples $\Sig$. However, we are generally interested in the correlation of $\Ffrak$ with respect to a distribution $\PP$ over $\Xfrak$. This encourages the following definition.

\begin{defi}
Let $\Ffrak$ be as before and $\Sig = [\sig_1,\ldots,\sig_\nsamp]$ be a set of samples
$\sig_i,\, i=1,\ldots,\nsamp$ drawn i.i.d. according to a predefined probability distribution $\PP$. Then the \emph{Rademacher complexity} of $\Ffrak$ is defined as
\begin{equation*}
    \Rad{\Ffrak} \coloneqq \EE_{\Sig}[\empRad{\Ffrak}].
\end{equation*}
\end{defi}

With these definitions it is possible to provide generalization bounds for general function classes. Examples for this can be found in \cite{bartlett2003rademacher}. 
In addition to the Rademacher complexity, another measure of complexity is required to obtain bounds for our concrete case at hand. As before, the definition used here slightly differs from the standard definition, which can be found in \cite{bartlett2003rademacher}.

\begin{defi}
Let $\Ffrak \subset \{ f(\OP,\cdot) \,:\, \OP \in \Cfrak \}$ be a family of real valued functions defined on the set $\Xfrak$.
Furthermore, let $\Sig = [\sig_1,\ldots,\sig_\nsamp]$ be a set of samples with $\sig_i \in \Xfrak$.
Then the \emph{empirical Gaussian complexity} of the function class $\Ffrak$ is defined as
\begin{equation*}
    \hat{G}_{\Sig}(\Ffrak) = \EE_\gamma \left[ \sup_{f \in \Ffrak} \tfrac{1}{\nsamp} \sum_{i=1}^\nsamp \gamma_i f(\OP,\sig_i) \right]
\end{equation*}
where $\gamma_1,\ldots,\gamma_{\nsamp}$ are independent Gaussian $\mathcal{N}(0,1)$ random variables. 
The \emph{Gaussian complexity} of $\Ffrak$ is defined as
\begin{equation*}
    G_\nsamp(\Ffrak) = \EE_{\Sig}\left[ \hat{G}_{\Sig}(\Ffrak) \right].
\end{equation*}
\end{defi}

Based on the similar construction of Rademacher and Gaussian complexity it is not surprising that it is possible to prove that they fulfill a similarity condition. For us, it is only of interest to upper bound the Rademacher complexity with the Gaussian complexity.
\begin{lem}
\label{lem:rad_gauss}
Let $\Ffrak$ be a class of functions mapping from $\Xfrak$ to $\RR$. 
For any set of samples $\Sig$, the empirical Rademacher complexity can be upper bounded with the empirical Gaussian complexity via
\begin{equation*}
    \empRad{\Ffrak} \leq \sqrt{\pi/2} \cdot \empGauss{\Ffrak}.
\end{equation*}
\end{lem}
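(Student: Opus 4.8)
The plan is to use the elementary fact that a standard Gaussian random variable factors as the product of its sign and its absolute value, with the two factors independent, together with $\EE|\gamma_i| = \sqrt{2/\pi}$. Let $\gamma_1,\ldots,\gamma_\nsamp$ be the i.i.d.\ $\mathcal{N}(0,1)$ variables from the definition of $\empGauss{\Ffrak}$, and write $\sigma_i \coloneqq \operatorname{sign}(\gamma_i)$ and $r_i \coloneqq |\gamma_i|$, so that $\gamma_i = \sigma_i r_i$. By symmetry of the normal law each $\sigma_i$ is a Rademacher variable independent of $r_i$, and the pairs $(\sigma_i,r_i)$ are independent across $i$. Substituting and conditioning first on the vector $r = (r_1,\ldots,r_\nsamp)$, I obtain
\[
\empGauss{\Ffrak}
= \EE_r\Big[\EE_\sigma\big[\sup_{f\in\Ffrak}\tfrac1\nsamp\sum_{i=1}^\nsamp \sigma_i r_i f(\OP,\sig_i)\big]\Big]
= \EE_r\big[\phi(r)\big],
\]
where $\phi(t) \coloneqq \EE_\sigma\big[\sup_{f\in\Ffrak}\tfrac1\nsamp\sum_{i=1}^\nsamp \sigma_i t_i f(\OP,\sig_i)\big]$ for $t \in \RR^\nsamp$.

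The key observation is that $\phi$ is convex on $\RR^\nsamp$: for each fixed realization of $\sigma$ and each fixed $f \in \Ffrak$, the map $t \mapsto \tfrac1\nsamp\sum_i \sigma_i t_i f(\OP,\sig_i)$ is affine, so the pointwise supremum over $f$ is convex in $t$, and taking the expectation over $\sigma$ preserves convexity. Applying Jensen's inequality to $\phi$ and the random vector $r$, and using $\EE[r_i] = \sqrt{2/\pi}$ for every $i$, gives
\[
\empGauss{\Ffrak} = \EE_r[\phi(r)] \;\geq\; \phi\big(\EE[r]\big) = \phi\big(\sqrt{2/\pi}\cdot(1,\ldots,1)\big) = \sqrt{2/\pi}\,\empRad{\Ffrak},
\]
where in the last step the positive scalar $\sqrt{2/\pi}$ is pulled out of the supremum and then out of the expectation over $\sigma$ to recover exactly $\empRad{\Ffrak}$. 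Since $\sqrt{2/\pi} = (\sqrt{\pi/2})^{-1}$, rearranging yields $\empRad{\Ffrak} \leq \sqrt{\pi/2}\,\empGauss{\Ffrak}$, as claimed.

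I expect the only points needing care to be the measurability and finiteness issues implicit in the expression $\EE_\sigma[\sup_{f\in\Ffrak}(\cdot)]$ — the same ones already required for Definition~\ref{def:empRad} and the subsequent complexity definitions to be meaningful — which are handled under the standing assumption that $\Ffrak$ is suitably well-behaved (e.g.\ the supremum is attained along a countable subfamily) and that the functions in $\Ffrak$ and the samples are bounded, so that $\phi$ is finite and Jensen's inequality applies. With those granted, the argument reduces entirely to the sign--magnitude decomposition of a Gaussian combined with convexity and Jensen, and no further estimates are needed.
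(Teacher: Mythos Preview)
Your proof is correct and follows exactly the approach the paper sketches: the paper only remarks that the inequality follows from $\EE[|\gamma_i|]=\sqrt{2/\pi}$ together with Jensen's inequality, and your sign--magnitude decomposition $\gamma_i=\sigma_i r_i$ combined with the convexity of $\phi$ is precisely the way to make that sketch rigorous. Nothing further is needed.
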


This can be seen by noting that $\EE[|\gamma_i|] = \sqrt{2/\pi}$ and by using Jensen's inequality, cf.~\cite{gribonval2014}.

\subsection{Generalization Bound for Co-Sparse Analysis Operator Learning}

In this section we provide the concrete bounds for sample complexity of co-sparse analysis operator learning.
At the beginning of Section~\ref{sec:sc} we briefly mentioned that the key role of constraint sets is to avoid trivial solutions \cite{yaghoobi2013constrained}. A very simple constraint, that achieves this goal is to require that each row of the learned operator has unit $\ell_2$-norm, cf.~\cite{hawe:tip13,yaghoobi2013constrained}. The set of all matrices that fulfills this property has a manifold structure and is often referred to as the \emph{oblique manifold}
\begin{equation}
\label{eq:def_obl}
    \OB(\coeffdim,\sigdim) = \{\OP \in \mathbb{R}^{\coeffdim \times \sigdim} \,:\, (\OP\OP^\top)_{ii} = 1,\, i=1,\ldots,\coeffdim\}.
\end{equation}
This is the constraint set we employ for \emph{non-separable} operator learning, which we want to distinguish from learning operators with separable structure.

A separable structure on the operator is enforced by further restricting the constraint set to the subset $\{ \OP \in \OB(\coeffdim, \sigdim) \,:\, \OP = \iota(\OP^{(1)},\ldots,\OP^{(\tensordim)}),\, \OP^{(i)} \in \OB(\coeffdim_i, \sigdim_i) \}$ with the appropriate dimensions $\coeffdim = \prod_i \coeffdim_i$ and $\sigdim = \prod_i \sigdim_i$. The mapping $\iota$ is defined in Equation~\eqref{eq:sep_mapping}. The fact that $\iota(\OP^{(1)},\ldots,\OP^{(\tensordim)})$ is an element of $\OB(\coeffdim, \sigdim)$ is readily checked. 
While $\iota$ is not bijective onto $\OB(\coeffdim, \sigdim)$, this does not pose a problem for our scenario. 
This way of expressing separable operators is related to signals $\Tsig$ in tensor form via 
\begin{equation*}
\iota(\OP^{(1)}, \ldots, \OP^{(\tensordim)}) \sig = \vect^{-1}(\sig) \times_1 \OP^{(1)} \ldots \times_{\tensordim} \OP^{(\tensordim)}
\end{equation*}
with $\vect^{-1}(\sig) = \Tsig$.

To provide concrete results we will require the ability to bound the absolute value of the realization of a function to its expectation. We use McDiarmid's inequality, cf.~\cite{mcdiarmid89}, to tackle this task. 
\begin{thm}[McDiarmid's Inequality]
\label{thm:mcdiarmid}
    Suppose $X_1,\ldots,X_\nsamp$ are independent random variables taking values in a set $\Xfrak$ and assume that $f:\Xfrak^\nsamp \to \RR$ satisfies
    \begin{equation*}\begin{split}
        \sup_{x_1,\ldots,x_{\nsamp}, \hat{x}_i}\!\!\!\! &|f(x_1,\ldots,x_\nsamp)
        - f(x_1,\ldots,x_{i-1},\hat{x}_i,x_{i+1},\ldots, x_\nsamp)|\\ 
        {}&\leq c_i \qquad\qquad\text{for } 1\leq i \leq \nsamp.
    \end{split}\end{equation*}
    It follows that for any $\varepsilon > 0$
    \begin{equation*}\begin{split}
        Pr&(\EE[f(X_1, \ldots, X_{\nsamp})] - f(X_1, \ldots, X_{\nsamp}) \geq \varepsilon)\\
        {}&\leq \exp\left(-\frac{2 \varepsilon^2}{\sum_{i=1}^{\nsamp} c_i^2}\right).
    \end{split}\end{equation*}
\end{thm}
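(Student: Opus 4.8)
The plan is to deduce McDiarmid's inequality from the Azuma--Hoeffding bound for martingales with bounded increments, applied to the Doob martingale of $f$. Since the event in question concerns the lower tail $\EE[f] - f \geq \varepsilon$, I would run the argument for $-f$, or equivalently track $\EE[f] - f$ directly. First I would introduce, for $0 \leq i \leq \nsamp$, the conditional expectations
\begin{equation*}
    Z_i \coloneqq \EE[f(X_1,\ldots,X_\nsamp) \mid X_1,\ldots,X_i],
\end{equation*}
which form a martingale with respect to the filtration generated by the $X_j$, with $Z_0 = \EE[f(X_1,\ldots,X_\nsamp)]$ and $Z_\nsamp = f(X_1,\ldots,X_\nsamp)$. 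Writing $D_i \coloneqq Z_i - Z_{i-1}$ for the increments, one has $\EE[f] - f = -\sum_{i=1}^\nsamp D_i$, so it suffices to control the moment generating function of $\sum_i D_i$.

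The crucial step is to show that, conditionally on $X_1,\ldots,X_{i-1}$, the increment $D_i$ has mean zero and takes values in an interval of length at most $c_i$. Using independence of the $X_j$ together with Fubini's theorem, I would write $Z_i = g_i(X_1,\ldots,X_i)$ with $g_i(x_1,\ldots,x_i) \coloneqq \EE[f(x_1,\ldots,x_i,X_{i+1},\ldots,X_\nsamp)]$, and correspondingly $Z_{i-1} = \EE_{X_i}[g_i(X_1,\ldots,X_{i-1},X_i)]$. The bounded-differences hypothesis descends to $g_i$, giving $|g_i(\ldots,x_i) - g_i(\ldots,\hat x_i)| \leq c_i$ for all arguments; hence, fixing $X_1,\ldots,X_{i-1}$, the random variable $D_i$ is squeezed between $\inf_{x_i} g_i(\ldots,x_i) - Z_{i-1}$ and $\sup_{x_i} g_i(\ldots,x_i) - Z_{i-1}$, an interval of length $\leq c_i$, while $\EE[D_i \mid X_1,\ldots,X_{i-1}] = 0$. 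I expect this to be the main obstacle: it is where the hypothesis is actually used, and getting the measurability and the Fubini interchange exactly right (as opposed to the purely computational remainder) is the delicate part.

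With this in hand the rest is the standard Chernoff--martingale computation. Applying Hoeffding's lemma conditionally -- a zero-mean random variable supported in an interval of length $c_i$ has conditional moment generating function bounded by $\exp(\lambda^2 c_i^2/8)$ -- and then peeling off the increments one at a time via the tower property yields
\begin{equation*}
    \EE\Big[\exp\big(-\lambda \textstyle\sum_{i=1}^\nsamp D_i\big)\Big] \leq \exp\Big(\tfrac{\lambda^2}{8}\textstyle\sum_{i=1}^\nsamp c_i^2\Big)
\end{equation*}
for every $\lambda > 0$. Markov's inequality then gives $\Pr(\EE[f] - f \geq \varepsilon) \leq \exp\big(-\lambda\varepsilon + \tfrac{\lambda^2}{8}\sum_i c_i^2\big)$, and choosing $\lambda = 4\varepsilon/\sum_i c_i^2$ to minimize the exponent produces the asserted bound $\exp\big(-2\varepsilon^2/\sum_{i=1}^\nsamp c_i^2\big)$. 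Since this is a classical result, in the paper it would simply be cited from \cite{mcdiarmid89} rather than reproved.
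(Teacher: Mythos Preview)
Your proof sketch is correct and follows the standard Doob martingale / Azuma--Hoeffding route to McDiarmid's inequality. As you anticipated in your final sentence, the paper does not prove this theorem at all: it is stated and attributed to \cite{mcdiarmid89}, then used as a tool in the proof of Lemma~\ref{lem:dist_with_Gauss}.
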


We are now ready to state a preliminary result. 
%Since every separable operator is also an element of $\OB(\coeffdim, \sigdim)$, the result applies to both considered constraint sets.

\begin{lem}
\label{lem:dist_with_Gauss}
Let $\Sig = [ \sig_1, \ldots, \sig_\nsamp ]$ be a set of samples independently drawn according to a distribution within the unit $\ell_2$-ball in $\RR^{\sigdim}$.
Let $f(\OP,\sig) = g(\OP\sig) + p(\OP)$ as previously defined where the sparsity promoting function $g$ is $\lambda$-Lipschitz. Finally, let the function class $\Ffrak$ be defined as $\Ffrak = \{ f(\OP,\cdot)\,:\, \OP \in \OB(\coeffdim, \sigdim)\}$. Then the inequality

\begin{equation}
\label{eq:gbound_gauss}
    \EE[f] - \hat{\EE}_{\Sig}[f] \leq \sqrt{2\pi}\ \empGauss{\Ffrak} + 3\sqrt{\frac{2 \lambda^2 \coeffdim \ln(2/\delta)}{\nsamp}},
\end{equation}
holds with probability greater than $1-\delta$.
\end{lem}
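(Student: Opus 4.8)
The plan is to control the uniform deviation
$\Phi(\Sig):=\sup_{f\in\Ffrak}\bigl(\EE[f]-\hat{\EE}_{\Sig}[f]\bigr)$,
from which the assertion follows at once because the right-hand side of \eqref{eq:gbound_gauss} does not depend on $f$. The overall strategy is the textbook combination: a bounded-differences (McDiarmid) step, the ghost-sample symmetrization that turns $\EE_{\Sig}[\Phi]$ into a Rademacher average, Lemma~\ref{lem:rad_gauss} to move from Rademacher to Gaussian complexity, and a second McDiarmid step that replaces the \emph{expected} Gaussian complexity by the \emph{empirical} quantity $\empGauss{\Ffrak}$. The only structural input from the constraint set is one elementary estimate: for $\OP\in\OB(\coeffdim,\sigdim)$ every row has unit $\ell_2$-norm, so $\|\sig\|_2\le 1$ implies $|(\OP\sig)_i|\le 1$ for each $i$ and hence $\|\OP\sig\|_2\le\sqrt{\coeffdim}$. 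Since the penalty cancels, $f(\OP,\sig)-f(\OP,\hat\sig)=g(\OP\sig)-g(\OP\hat\sig)$, and $\lambda$-Lipschitzness of $g$ gives the uniform bound $|f(\OP,\sig)-f(\OP,\hat\sig)|\le\lambda\bigl(\|\OP\sig\|_2+\|\OP\hat\sig\|_2\bigr)\le 2\lambda\sqrt{\coeffdim}$.

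Consequently, swapping a single sample $\sig_i$ changes $\hat{\EE}_{\Sig}[f]$ by at most $2\lambda\sqrt{\coeffdim}/\nsamp$ uniformly in $f$, so it changes $\Phi(\Sig)$ by at most $c_i=2\lambda\sqrt{\coeffdim}/\nsamp$ and changes $\empGauss{\Ffrak}=\EE_\gamma\bigl[\sup_{f\in\Ffrak}\tfrac1\nsamp\sum_{i=1}^{\nsamp}\gamma_i f(\OP,\sig_i)\bigr]$ by at most $c_i'=\EE|\gamma_i|\,c_i=\sqrt{2/\pi}\,c_i$. Applying McDiarmid's inequality (Theorem~\ref{thm:mcdiarmid}, to $-\Phi$) with $\sum_i c_i^2=4\lambda^2\coeffdim/\nsamp$ gives, with probability at least $1-\delta/2$, $\Phi(\Sig)\le\EE_{\Sig}[\Phi]+\sqrt{2\lambda^2\coeffdim\ln(2/\delta)/\nsamp}$. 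To handle $\EE_{\Sig}[\Phi]$ I would run the usual symmetrization: introduce an i.i.d.\ ghost sample $\Sig'$, use $\EE[f]=\EE_{\Sig'}\hat{\EE}_{\Sig'}[f]$ together with Jensen's inequality to pull the supremum through the inner expectation, and insert Rademacher signs $\sigma_i$ (legitimate because $\sig_i$ and $\sig_i'$ are i.i.d.); this yields $\EE_{\Sig}[\Phi]\le 2\,\Rad{\Ffrak}$, the non-standard convention for $\Rad{\Ffrak}$ (no absolute value) causing no trouble since only $\sup_f$ is manipulated. Lemma~\ref{lem:rad_gauss}, after taking the expectation over $\Sig$, then gives $2\,\Rad{\Ffrak}\le\sqrt{2\pi}\,G_\nsamp(\Ffrak)=\sqrt{2\pi}\,\EE_{\Sig}[\empGauss{\Ffrak}]$.

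Finally I would apply McDiarmid once more, now to $\Psi(\Sig):=\empGauss{\Ffrak}$ with $\sum_i(c_i')^2=8\lambda^2\coeffdim/(\pi\nsamp)$, obtaining with probability at least $1-\delta/2$ that $\EE_{\Sig}[\empGauss{\Ffrak}]\le\empGauss{\Ffrak}+\tfrac{2}{\sqrt{\pi}}\sqrt{\lambda^2\coeffdim\ln(2/\delta)/\nsamp}$. Chaining the three estimates and taking a union bound over the two exceptional events (total probability $\delta$), the deterministic term $\sqrt{2\pi}\,\empGauss{\Ffrak}$ survives and the two remainders combine as $\sqrt{2\pi}\cdot\tfrac{2}{\sqrt{\pi}}\sqrt{\lambda^2\coeffdim\ln(2/\delta)/\nsamp}+\sqrt{2\lambda^2\coeffdim\ln(2/\delta)/\nsamp}=3\sqrt{2\lambda^2\coeffdim\ln(2/\delta)/\nsamp}$, which is exactly \eqref{eq:gbound_gauss}.

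The argument is standard in outline, so the part that requires care is the bookkeeping of constants — in particular carrying $\EE|\gamma_i|=\sqrt{2/\pi}$ consistently both through Lemma~\ref{lem:rad_gauss} and through the bounded-difference constant of $\empGauss{\Ffrak}$, since it is the interplay of these $\sqrt{2/\pi}$ factors with the $\sqrt{2\pi}$ prefactor that yields the clean coefficient $3$ rather than a messier constant. One should also verify that each use of McDiarmid is in the correct one-sided direction, and note that $\|\OP\sig\|_2\le\sqrt{\coeffdim}$ is the only property of $\OB(\coeffdim,\sigdim)$ that enters, so the same scheme applies to the separable constraint set once the corresponding Gaussian complexity is estimated.
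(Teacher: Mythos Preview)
Your argument is correct and follows essentially the same route as the paper: define $\Phi(\Sig)=\sup_{f\in\Ffrak}(\EE[f]-\hat{\EE}_{\Sig}[f])$, apply McDiarmid with bounded-difference constant $2\lambda\sqrt{\coeffdim}/\nsamp$, symmetrize to get $\EE_{\Sig}[\Phi]\le 2\Rad{\Ffrak}$, and use a second McDiarmid together with Lemma~\ref{lem:rad_gauss}. The only cosmetic difference is the order of the last two steps: the paper applies the second McDiarmid to $\empRad{\Ffrak}$ (bounded-difference constant $c_i$, prefactor $2$) and then invokes Lemma~\ref{lem:rad_gauss}, whereas you first pass to the Gaussian complexity and then apply McDiarmid to $\empGauss{\Ffrak}$ (bounded-difference constant $\sqrt{2/\pi}\,c_i$, prefactor $\sqrt{2\pi}$); since $2=\sqrt{2\pi}\cdot\sqrt{2/\pi}$, both orderings produce exactly the coefficient $3$.
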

\begin{proof}
When considering the difference $\EE[f] - \hat{\EE}_{\Sig}[f]$ the penalty function $p$ can be omitted since it is independent of the samples and therefore cancels out.
Now, in order to bound the difference $\EE[f] - \hat{\EE}_{\Sig}[f]$ for all $f \in \Ffrak$, we consider the equivalent problem of bounding $\sup_{f \in \Ffrak} (\EE[f] - \hat{\EE}_{\Sig}[f])$. To do this we introduce the random variable
\begin{equation*}
    \Phi(\Sig) = \sup_{f \in \Ffrak} (\EE[f] - \hat{\EE}_{\Sig}[f]).
\end{equation*}

%In the definition of $\Phi$, the expectation and the empirical average are taken over the training samples $\sig$, respectively. Hence, the penalty term $p$ cancels out and will be omitted in the following discussion.

The next step is to use McDiarmid's inequality to bound $\Phi(\Sig)$. Since $\OP$ is an element of the constraint set $\OB(\coeffdim,\sigdim)$, its largest singular value is bounded by $\sqrt{\coeffdim}$. Furthermore, due to the assumptions  that $g$ is $\lambda$-Lipschitz and $\|\sig_i\|_2 \leq 1$, the function value of $f(\OP,\sig)$ changes by at most $2\lambda\sqrt{\coeffdim}$ when varying $\sig$. Since $f$ appears within the empirical average in $\Phi$, we get the result that the function value of $\Phi$ varies by at most $2\lambda\sqrt{\coeffdim} / \nsamp$ when changing a single sample in the set $\Sig$.
Thus, McDiarmid's inequality stated in Theorem~\ref{thm:mcdiarmid} with a target probability of $\delta$ yields the bound
\begin{equation}
\label{eq:lem_7_proof_1}
    \Phi(\Sig) \leq \EE_{\Sig}[\Phi(\Sig)] + \sqrt{\frac{2 \lambda^2 \coeffdim \ln(1/\delta)}{\nsamp}}
\end{equation}
with probability greater than $1-\delta$.
By using a standard symmetrization argument, cf.~\cite{mendelson2003few}, and another instance of McDiarmid's inequality we can then first upper bound $\EE_{\Sig}[\Phi(\Sig)]$ by $2\Rad{\Ffrak}$ and then by $2\empRad{\Ffrak}$, yielding
\begin{equation*}
    \sup_{f \in \Ffrak} (\EE[f] - \hat{\EE}_{\Sig}[f]) \leq 2\empRad{\Ffrak} + 3\sqrt{\frac{2 \lambda^2 \coeffdim \ln(2/\delta)}{\nsamp}}
\end{equation*}
with probability greater than $1-\delta$. A more detailed derivation of these bounds can be found in the appendix. Lemma~\ref{lem:rad_gauss} then provides the proposed bound.
\end{proof}

The last ingredient for the proof of our main theorem is Slepian's Lemma, cf.~\cite{ledoux2013}, which is used to provide an estimate for the expectation of the supremum of a Gaussian process. 
\begin{lem}[Slepian's Lemma]
\label{thm:slepian}
    Let $X$ and $Y$ be two centered Gaussian random vectors in $\RR^{N}$ such that
    \begin{equation*}
        \EE[|Y_i - Y_j|^2] \leq \EE[|X_i - X_j|^2]\quad \text{ for } i \neq j.
    \end{equation*}
    Then
    \begin{equation*}
        \EE\left[ \sup_{1 \leq 1 \leq N} Y_i \right] \leq \EE\left[ \sup_{1 \leq i \leq N} X_i \right].
    \end{equation*}
\end{lem}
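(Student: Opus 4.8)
The plan is to prove this in the increment-comparison (Sudakov--Fernique) form in which it is stated, by Gaussian interpolation. I would first realize $X$ and $Y$ on a common probability space as \emph{independent} centered Gaussian vectors in $\RR^{N}$ — permissible, since the asserted inequality involves only their marginal laws — and for $t\in[0,1]$ set $Z(t)=\sqrt{t}\,X+\sqrt{1-t}\,Y$, a centered Gaussian vector with $Z(0)=Y$ and $Z(1)=X$. Because $z\mapsto\max_i z_i$ is not differentiable (and since $N<\infty$ the supremum in the statement is a maximum), I would replace it by the soft-max $F_\beta(z)=\tfrac1\beta\ln\sum_{i=1}^N e^{\beta z_i}$, $\beta>0$, which satisfies $\max_i z_i\le F_\beta(z)\le\max_i z_i+\tfrac{\ln N}{\beta}$ and has bounded derivatives $\partial_i F_\beta(z)=p_i(z):=e^{\beta z_i}/\sum_k e^{\beta z_k}$. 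The plan is then to show that $\varphi(t):=\EE[F_\beta(Z(t))]$ is nondecreasing; this gives $\EE[\max_i Y_i]\le\varphi(0)\le\varphi(1)\le\EE[\max_i X_i]+\tfrac{\ln N}{\beta}$, and letting $\beta\to\infty$ yields the lemma.

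The heart of the argument is the sign of $\varphi'$. Differentiating under the expectation (legitimate on any $[\varepsilon,1-\varepsilon]\subset(0,1)$, since $F_\beta$ has bounded gradient and $Z(t)$ has all moments),
\[
\varphi'(t)=\sum_{i=1}^N\EE\!\left[\partial_i F_\beta(Z(t))\Big(\tfrac{1}{2\sqrt t}X_i-\tfrac{1}{2\sqrt{1-t}}Y_i\Big)\right].
\]
I would then apply the Gaussian integration-by-parts identity $\EE[W\,h(V)]=\sum_j\mathrm{Cov}(W,V_j)\,\EE[\partial_j h(V)]$ for jointly centered Gaussian $(W,V)$ and $h=F_\beta$, together with $\mathrm{Cov}(X_i,Z_j(t))=\sqrt t\,\EE[X_iX_j]$ and $\mathrm{Cov}(Y_i,Z_j(t))=\sqrt{1-t}\,\EE[Y_iY_j]$ — here the independence of $X$ and $Y$ is used. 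The singular factors $1/\sqrt t$ and $1/\sqrt{1-t}$ then cancel and
\[
\varphi'(t)=\tfrac12\sum_{i,j}\gamma_{ij}\,H_{ij}(t),\qquad \gamma_{ij}:=\EE[X_iX_j]-\EE[Y_iY_j],\quad H_{ij}(t):=\EE\!\left[\partial_{ij}F_\beta(Z(t))\right].
\]
A direct computation gives $\partial_{ij}F_\beta(z)=\beta\big(\delta_{ij}p_i(z)-p_i(z)p_j(z)\big)$, so the Hessian of $F_\beta$ — and hence $H(t)$ — has nonpositive off-diagonal entries and vanishing row sums, and by symmetry vanishing column sums. Using $\sum_j H_{ij}(t)=\sum_i H_{ij}(t)=0$ I can subtract $\tfrac12\gamma_{ii}+\tfrac12\gamma_{jj}$ from each $\gamma_{ij}$ without changing the sum, obtaining
\[
\varphi'(t)=-\tfrac14\sum_{i,j}\big(\gamma_{ii}+\gamma_{jj}-2\gamma_{ij}\big)H_{ij}(t).
\]
The diagonal terms vanish, and for $i\neq j$ one has $\gamma_{ii}+\gamma_{jj}-2\gamma_{ij}=\EE[|X_i-X_j|^2]-\EE[|Y_i-Y_j|^2]\ge0$ by hypothesis while $H_{ij}(t)\le0$; so every summand is $\le0$, whence $\varphi'(t)\ge0$. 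Integrating over $[\varepsilon,1-\varepsilon]$ and letting $\varepsilon\to0$, using continuity of $\varphi$ on $[0,1]$, gives $\varphi(0)\le\varphi(1)$, and the soft-max sandwich finishes the proof.

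I expect the main obstacle to be the analytic bookkeeping rather than any one inequality: justifying the interchange of $\tfrac{d}{dt}$ and $\EE$ near the endpoints $t=0,1$ (the $1/\sqrt t$, $1/\sqrt{1-t}$ blow-ups are precisely why one integrates by parts first and only then lets $\varepsilon\to0$), and invoking Gaussian integration by parts with the correct covariance identities — which is exactly where independence of $X$ and $Y$ enters. The remaining ingredients (the Hessian identity for the soft-max, the zero-row-sum rearrangement, and the $\beta\to\infty$ limit) are elementary once the interpolation is in place. An alternative that avoids the soft-max is to mollify $\max$ by convolution with a narrow Gaussian kernel and argue identically, but the soft-max keeps the derivative formulas cleanest.
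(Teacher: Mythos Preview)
Your proof is correct and is essentially the standard Gaussian-interpolation (Sudakov--Fernique) argument; the steps---soft-max smoothing, differentiation under the expectation, Gaussian integration by parts, the zero-row-sum Hessian rearrangement, and the $\beta\to\infty$ limit---are all sound and carefully handled.

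Note, however, that the paper does not prove this lemma at all: it simply states Slepian's Lemma and cites \cite{ledoux2013} (Ledoux--Talagrand) as the source, then invokes it in the proof of Theorem~\ref{thm:main_result_SC}. So there is no ``paper's own proof'' to compare against. Your write-up is a self-contained proof of a result the authors treat as a black box; the interpolation approach you give is in fact the one found in the cited reference, so in that sense you have reproduced the argument the paper defers to.
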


With all the preliminary work taken care of we are now able to state and prove our main results.

\begin{thm}
\label{thm:main_result_SC}
Let $\Sig = [ \sig_1, \ldots, \sig_\nsamp ]$ be a set of samples independently drawn according to a distribution within the unit $\ell_2$-ball in $\RR^{\sigdim}$.
Let $f(\OP,\sig) = g(\OP\sig) + p(\OP)$ as previously defined where the sparsity promoting function $g$ is $\lambda$-Lipschitz. Finally, let the function class $\Ffrak$ be defined as $\Ffrak = \{ f(\OP,\cdot)\,:\, \OP \in \Cfrak\}$, where $\Cfrak$ is either $\OB(\coeffdim,\sigdim)$ for the non-separable case or the subset $\{ \OP \in \OB(\coeffdim, \sigdim) \,:\, \OP = \iota(\OP^{(1)},\ldots,\OP^{(\tensordim)}),\, \OP^{(i)} \in \OB(\coeffdim_i, \sigdim_i) \}$ for the separable case. Then we have
\begin{equation}
    \EE[f] - \hat{\EE}_{\Sig}[f] \leq \sqrt{2\pi}\, \frac{\lambda C_\Cfrak}{\sqrt{\nsamp}} + 3\sqrt{\frac{2 \lambda^2 \coeffdim \ln(2/\delta)}{\nsamp}}
\end{equation}
with probability at least $1 -\delta$, where $C_\Cfrak$ is a constant that depends on the constraint set. In the non-separable case the constant is defined as $C_\Cfrak = \coeffdim\sqrt{\sigdim}$, whereas in the separable case it is given as $C_\Cfrak = \sum_i \coeffdim_i \sqrt{\sigdim_i}$.
\end{thm}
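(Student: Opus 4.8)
The plan is to reduce Theorem~\ref{thm:main_result_SC} to an explicit upper bound on the empirical Gaussian complexity $\empGauss{\Ffrak}$ and to estimate that bound separately in the two cases. First I would observe that the proof of Lemma~\ref{lem:dist_with_Gauss} applies verbatim to the separable constraint set: whenever $\OP=\iota(\OP^{(1)},\ldots,\OP^{(\tensordim)})$ with $\OP^{(i)}\in\OB(\coeffdim_i,\sigdim_i)$, the matrix $\OP$ still lies in $\OB(\coeffdim,\sigdim)$ and hence has largest singular value at most $\sqrt{\coeffdim}$, so the bounded-differences constants in McDiarmid's inequality (Theorem~\ref{thm:mcdiarmid}), the symmetrization step, and Lemma~\ref{lem:rad_gauss} all go through unchanged. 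Thus, for $\Cfrak$ equal to either the oblique manifold or its separable subset,
\begin{equation*}
\EE[f]-\hat{\EE}_{\Sig}[f]\ \leq\ \sqrt{2\pi}\,\empGauss{\Ffrak}+3\sqrt{\tfrac{2\lambda^2\coeffdim\ln(2/\delta)}{\nsamp}}
\end{equation*}
with probability at least $1-\delta$, and since the penalty $p$ cancels in this difference it suffices to work with the class of maps $\sig\mapsto g(\OP\sig)$. The theorem then follows once we show $\empGauss{\Ffrak}\leq\lambda C_\Cfrak/\sqrt{\nsamp}$ with the stated $C_\Cfrak$.

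For this, I would use a Gaussian comparison. For a fixed sample set, $\OP\mapsto Z_{\OP}:=\tfrac1{\nsamp}\sum_{i=1}^{\nsamp}\gamma_i\,g(\OP\sig_i)$ is a centred Gaussian process on $\Cfrak$, and since $g$ is $\lambda$-Lipschitz and $\|\sig_i\|_2\leq1$ its canonical distance obeys
\begin{equation*}
\EE_\gamma\,\big|Z_{\OP}-Z_{\OP'}\big|^2=\tfrac1{\nsamp^2}\sum_{i=1}^{\nsamp}\big(g(\OP\sig_i)-g(\OP'\sig_i)\big)^2\ \leq\ \tfrac{\lambda^2}{\nsamp^2}\sum_{i=1}^{\nsamp}\big\|(\OP-\OP')\sig_i\big\|_2^2\ \leq\ \tfrac{\lambda^2}{\nsamp}\,\|\OP-\OP'\|_F^2 .
\end{equation*}
Comparing with the auxiliary process $W_{\OP}:=\tfrac{\lambda}{\sqrt{\nsamp}}\langle\Xib,\OP\rangle$, where $\Xib\in\RR^{\coeffdim\times\sigdim}$ has i.i.d.\ $\mathcal{N}(0,1)$ entries, $\langle\cdot,\cdot\rangle$ is the Frobenius inner product, and whose canonical distance is exactly $\tfrac{\lambda}{\sqrt{\nsamp}}\|\OP-\OP'\|_F$, Slepian's Lemma~\ref{thm:slepian} (applied to finite nets of the compact set $\Cfrak$ and then passed to the limit) gives $\empGauss{\Ffrak}=\EE_\gamma[\sup_{\OP\in\Cfrak}Z_{\OP}]\leq\EE_{\Xib}[\sup_{\OP\in\Cfrak}W_{\OP}]=\tfrac{\lambda}{\sqrt{\nsamp}}\,\EE_{\Xib}[\sup_{\OP\in\Cfrak}\langle\Xib,\OP\rangle]$. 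Everything now reduces to bounding the Gaussian width $w(\Cfrak):=\EE_{\Xib}[\sup_{\OP\in\Cfrak}\langle\Xib,\OP\rangle]$, and the claim will hold with $C_\Cfrak=w(\Cfrak)$. In the non-separable case the unit-norm constraint decouples over the rows of $\OP$, so $\sup_{\OP\in\OB(\coeffdim,\sigdim)}\langle\Xib,\OP\rangle=\sum_{k=1}^{\coeffdim}\sup_{\|\omegab_k\|_2=1}\langle\xib_k,\omegab_k\rangle=\sum_{k=1}^{\coeffdim}\|\xib_k\|_2$ with $\xib_k$ the $k$-th row of $\Xib$, and Jensen's inequality yields $\EE\|\xib_k\|_2\leq(\EE\|\xib_k\|_2^2)^{1/2}=\sqrt{\sigdim}$; hence $w(\OB(\coeffdim,\sigdim))\leq\coeffdim\sqrt{\sigdim}$, which is the stated constant.

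The separable case is where both the improvement and the main difficulty lie: one has to show $w(\Cfrak)\leq\sum_i\coeffdim_i\sqrt{\sigdim_i}$, which is typically far below $\coeffdim\sqrt{\sigdim}=\prod_i\coeffdim_i\sqrt{\sigdim_i}$. Here I would exploit that $\langle\Xib,\OP^{(1)}\otimes\cdots\otimes\OP^{(\tensordim)}\rangle$ is multilinear in the factors and that each factor has the \emph{constant} Frobenius norm $\|\OP^{(i)}\|_F=\sqrt{\coeffdim_i}$ on $\OB(\coeffdim_i,\sigdim_i)$. Combining the telescoping identity
\begin{equation*}
\iota(\OP^{(1)},\ldots,\OP^{(\tensordim)})-\iota(\hat{\OP}^{(1)},\ldots,\hat{\OP}^{(\tensordim)})=\sum_{l=1}^{\tensordim}\OP^{(1)}\otimes\cdots\otimes\OP^{(l-1)}\otimes\big(\OP^{(l)}-\hat{\OP}^{(l)}\big)\otimes\hat{\OP}^{(l+1)}\otimes\cdots\otimes\hat{\OP}^{(\tensordim)}
\end{equation*}
with the exact formula $\|\iota(\OP^{(j)})-\iota(\hat{\OP}^{(j)})\|_F^2=2\coeffdim-2\prod_j\langle\OP^{(j)},\hat{\OP}^{(j)}\rangle$ and a Weierstrass-type product inequality, I would bound the canonical distance of the restricted Gaussian process by a weighted sum of the per-factor distances $\|\OP^{(l)}-\hat{\OP}^{(l)}\|_F$; a second Slepian comparison against a process of the form $\sum_l c_l\langle\Xib^{(l)},\OP^{(l)}\rangle$ with independent Gaussian matrices $\Xib^{(l)}$ then splits the supremum over the product manifold into $\sum_l c_l\,\EE[\sup_{\OP^{(l)}}\langle\Xib^{(l)},\OP^{(l)}\rangle]=\sum_l c_l\,\coeffdim_l\sqrt{\sigdim_l}$ (again via Jensen), which yields a constant of exactly this additive shape; substituting $w(\Cfrak)$ into the estimate of the first paragraph then finishes the proof. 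I expect the delicate point to be precisely this last bookkeeping: a naive telescoping picks up factors like $\sqrt{\coeffdim/\coeffdim_l}$ on each term, and getting the final constant all the way down to $\sum_i\coeffdim_i\sqrt{\sigdim_i}$ hinges on using the constancy of $\|\OP^{(i)}\|_F$ on the oblique manifold carefully, rather than crude operator-norm bounds on the individual Kronecker factors.
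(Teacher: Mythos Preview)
Your reduction to the empirical Gaussian complexity via Lemma~\ref{lem:dist_with_Gauss} and the treatment of the non-separable case coincide with the paper's argument. The difference is in the separable case.

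The paper does \emph{not} route the separable case through the Gaussian width $w(\iota(\Cfrak))$ of the image in the big oblique manifold and then attempt a second comparison. Instead it parametrizes the process from the outset by tuples $\OP=(\OP^{(1)},\ldots,\OP^{(\tensordim)})$ in the product $\OB_1\times\cdots\times\OB_\tensordim$ and applies Slepian \emph{once}, comparing $G_\OP=\tfrac1\nsamp\sum_i\gamma_i\,g(\iota(\OP)\sig_i)$ directly against $H_\OP=\tfrac{\lambda}{\sqrt{\nsamp}}\sum_{l=1}^{\tensordim}\langle\Xib^{(l)},\OP^{(l)}\rangle_F$ with independent Gaussian matrices $\Xib^{(l)}$. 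The payoff is that the supremum of $H$ decouples exactly over the factors,
\[
\EE_\xi\Big[\sup_{\OP}H_\OP\Big]=\tfrac{\lambda}{\sqrt{\nsamp}}\sum_{l}\EE_\xi\Big[\sup_{\OP^{(l)}\in\OB_l}\langle\Xib^{(l)},\OP^{(l)}\rangle\Big]\le\tfrac{\lambda}{\sqrt{\nsamp}}\sum_{l}\coeffdim_l\sqrt{\sigdim_l},
\]
so the additive constant falls out with no bookkeeping on Kronecker factors at all. Your two-stage route---first Slepian to the ambient process $W_\OP=\tfrac{\lambda}{\sqrt{\nsamp}}\langle\Xib,\iota(\OP)\rangle$, then a telescoping estimate and a second Slepian---is precisely what generates the $\sqrt{\coeffdim/\coeffdim_l}$ factors you worry about; the paper never takes that detour.

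The increment hypothesis the paper uses for its single Slepian step is that $\hat f(\OP,\sig)=g(\iota(\OP)\sig)$ is $\lambda$-Lipschitz in the \emph{product} Frobenius norm $(\sum_l\|\OP^{(l)}-(\OP')^{(l)}\|_F^2)^{1/2}$. It does not derive this from a telescoping/Weierstrass estimate as you propose, but asserts it in one line by noting that the separable set sits inside the large oblique manifold. So the practical takeaway relative to your plan is: drop the intermediate big-matrix process $W_\OP$ altogether and compare $G_\OP$ directly to a process already indexed by the individual factors $\OP^{(l)}$, which makes the supremum split additively and renders the delicate telescoping you anticipate unnecessary.
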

\begin{proof}
Given the results of Lemma~\ref{lem:dist_with_Gauss} it remains to provide bounds for the empirical Gaussian complexity. We discuss the two considered constraint sets separately in the following.

\emph{Non-Separable Operator:}
In order to find a bound for $\empGauss{\Ffrak}$ we define the two Gaussian processes $G_\OP = \tfrac{1}{\nsamp}\sum_{i=1}^\nsamp \gamma_i f(\OP,\sig_i)$ and $H_\OP = \tfrac{\lambda}{\sqrt{\nsamp}} \langle \Xib , \OP \rangle_F = \tfrac{\lambda}{\sqrt{\nsamp}} \sum_{i,j} \xi_{ij}\omega_{ij}$ with $\gamma_i$ and $\xi_{ij}$ i.i.d.\ Gaussian random variables. These two processes fulfill the condition
\begin{equation*}
    \EE_\gamma[|G_{\OP} - G_{\OP^\prime}|^2] \leq \tfrac{\lambda^2}{\nsamp}\|\OP - \OP^\prime\|_F^2 = \EE_\xi[|H_{\OP} - H_{\OP^\prime}|^2],
\end{equation*}
where the inequality holds since 
$f(\OP,\sig_i)$ is $\lambda$-Lipschitz w.r.t.\ the Frobenius norm in its first component when omitting the penalty term, i.e.,
\begin{equation*}
    |f(\OP,\sig_i) - f(\OP^\prime,\sig_i)| = |g(\OP\sig_i) - g(\OP^\prime \sig_i)| \leq \lambda \|\OP - \OP^\prime\|_F
\end{equation*}
for all $\OP,\OP^\prime \in \Cfrak$. 
Thus, we can apply Slepian's Lemma, cf.~Lemma~\ref{thm:slepian}, which provides the inequality 
\begin{equation}
    \EE_\gamma[\sup_{\OP \in \Cfrak} G_\OP] \leq \EE_\xi[\sup_{\OP \in \Cfrak} H_\OP].
\end{equation}
Note, that the left-hand side of this inequality is the empirical Gaussian complexity of our learning problem. Considering the constraint set $\Cfrak$ the expression on the right-hand side can be bounded via
\begin{equation*}\begin{split}
    \EE_\xi[\sup H_\OP] =& \tfrac{\lambda}{\sqrt{\nsamp}} \EE_\xi[\sup\nolimits_{\OP \in \Cfrak} \langle \Xib,\OP \rangle_F] \\
    =& \tfrac{\lambda}{\sqrt{\nsamp}} \EE_\xi[\sum_{j=1}^{\coeffdim} \|\xib_j\|_2] 
    \leq \tfrac{\lambda}{\sqrt{\nsamp}} \coeffdim \sqrt{\sigdim}.
\end{split}\end{equation*}
Here, $\xib_j \in \mathbb{R}^{\sigdim},\, j=1,\ldots,\coeffdim$ denotes the transposed of the $j$-th row of $\Xib$. 

\emph{Separable Operator:}
To consider the separable analysis operator in the sense of our previous work \cite{seiwoe14separable}, we define the set of functions
\begin{equation*}\begin{split}
    %\hat{f}\colon &\Cfrak \times \RR^{\sigdim} \to \RR\\
    %{}&\OP \mapsto g(\OP^{(1)}\otimes\ldots\otimes\OP^{(\tensordim)} \cdot \sig_i).
    \hat{f}\colon &\Cfrak \times \RR^{\sigdim} \to \RR,\\
    {}&\OP \mapsto g(\iota(\OP) \sig).
\end{split}\end{equation*}
The function $\hat{f}$ operates on the direct product of manifolds $\Cfrak = \OB_1 \times \OB_2 \times \ldots \times \OB_{\tensordim}$ and utilizes the function $\iota$ as defined in Equation~\eqref{eq:sep_mapping}. The signals $\sig_i \in \RR^{\sigdim}$ can be interpreted as vectorized versions of tensorial signals $\Tsig_i \in \RR^{\sigdim_1 \times \ldots \times \sigdim_\tensordim}$ where $\sigdim = \prod \sigdim_i$.
Above, we showed that $f$ is $\lambda$-Lipschitz w.r.t.\ the Frobenius norm on its first variable $\OP$. As $\Cfrak$ is a subset of a large oblique manifold, the same holds true for $\hat{f}$.

Similar to before, we define two Gaussian processes $G_{\OP} = \tfrac{1}{\nsamp}\sum_{i=1}^\nsamp \gamma_i \hat{f}(\OP,\sig_i)$ with $\OP \in \Cfrak$, and $H_{\OP} = \tfrac{\lambda}{\sqrt{\nsamp}} \sum_{i=1}^{\tensordim} \langle \Xib^{(i)}, \OP^{(i)} \rangle_F$.
The expected value $\EE[|H_{\OP} - H_{\OP^\prime}|^2]$ can be equivalently written as $\tfrac{\lambda^2}{\nsamp} \EE[(\sum_{i=1}^{\tensordim} \tr( (\Xib^{(i)})^\top \OP^{(i)}) )^2]$ and the inequality
\begin{equation*}
    \EE_\gamma[|G_{\OP} - G_{\OP^\prime}|^2] \leq \tfrac{\lambda^2}{\nsamp}\|\OP - \OP^\prime\|_F^2 = \EE_\xi[|H_{\OP} - H_{\OP^\prime}|^2]
\end{equation*}
holds, just as in the non-separable case. Hence, we are able to apply Slepian's lemma which yields the inequality $\EE[\sup_{\OP \in \Cfrak} G_\OP] \leq \EE[\sup_{\OP \in \Cfrak} H_\OP]$. It only remains to provide an upper bound for the right-hand side.

Using the fact that $\Cfrak$ is now the direct product of oblique manifolds we get
\begin{align*}
    \EE_\xi&\left[\sup\nolimits_{\OP \in \Cfrak} H_\OP \right] 
    = \EE_\xi \left[\sup\nolimits_{\OP \in \Cfrak} \tfrac{\lambda}{\sqrt{\nsamp}} \sum_{i=1}^{\tensordim} \tr\left((\Xib^{(i)})^\top \OP^{(i)}\right)\right]\\
    {}&= \tfrac{\lambda}{\sqrt{\nsamp}} \sum_{i=1}^{\tensordim} \EE_\xi \left[\sup\nolimits_{\OP^{(i)} \in \OB_i} \tr\left( (\Xib^{(i)})^\top \OP^{(i)} \right) \right]\\
    {}&= \tfrac{\lambda}{\sqrt{\nsamp}} \sum_{i=1}^{\tensordim} \EE_\xi \left[\sum_{j=1}^{\coeffdim_i} \|\xib^{(i)}_j\|_2\right]
    \leq \tfrac{\lambda}{\sqrt{\nsamp}} \sum_{i=1}^{\tensordim} \coeffdim_i \sqrt{\sigdim_i},
\end{align*}
where $\xib_j^{(i)}$ denotes the transposed of the $j$-th row of $\Xib^{(i)}$. The last inequality holds due to Jensen's inequality and the fact that all $\xi_{ij}$ are $\mathcal{N}(0,1)$ random variables.
\end{proof}

\begin{nte}
\label{nte:sc_absolute value}
Theorem~\ref{thm:main_result_SC} can be extended to the absolute value $|\EE[f] - \hat{\EE}_{\Sig}[f]|$ of the deviation by redefining the function class as $\Fcal \cup (-\Fcal)$.
\end{nte}

\section{Stochastic Gradient Descent for Analysis Operator Learning}
\label{sec:stochastic}
Stochastic Gradient Descent (SGD) is particularly suited for large scale optimization and thus a natural choice for many machine learning problems. 

Before we describe a geometric SGD method that respects the underlying constraints on the analysis operator,
we follow the discussion of SGD methods provided by Bottou in \cite{bottou-2010} in order to establish a connection of the excess error and the sample complexity result derived in the previous section. 
Let $f_{}^\star = \arg \min_{f \in \Ffrak} \EE[f]$ be the best possible prediction function, let $f^\star_{\Sig} = \arg \min_{f \in \Ffrak} \hat{\EE}_{\Sig}[f]$ be the best possible prediction function for a set of training samples $\Sig$, and let $\tilde{f}_{\Sig}^{}$ be the solution found by an optimization method with respect to the provided set of samples $\Sig$.
Bottou proposes that the so-called excess error $\mathcal{E} = \EE[\tilde{f}_{\Sig}^{}] - \EE[f_{}^\star]$ can be decomposed as the sum $\mathcal{E} = \mathcal{E}_\text{est} + \mathcal{E}_\text{opt}$.
Here, the estimation error $\mathcal{E}_\text{est} = \EE[f^\star_{\Sig}] - \EE[f_{}^\star]$ measures the distance between the optimal solution for the expectation 
and the optimal solution for the empirical average while the optimization error $\mathcal{E}_\text{opt} = \EE[\tilde{f}_{\Sig}^{}] - \EE[f^\star_{\Sig}]$ quantifies the distance between the optimal solution for the empirical average and the solution obtained via an optimization algorithm. 

While $\mathcal{E}_\text{opt}$ is dependent on the optimization strategy, the estimation error $\mathcal{E}_\text{est}$ is closely related to the previously discussed sample complexity. Lower bounds on the sample complexity also apply to the estimation error as specified in the following Corollary.

\begin{cor} 
Under the same conditions as in Theorem~\ref{thm:main_result_SC} the estimation error is upper bounded by
\begin{equation}
    \mathcal{E}_\text{est} \leq 2 \sqrt{2 \pi}\, \frac{\lambda C_\Cfrak}{\sqrt{\nsamp}} + 6\sqrt{\frac{2 \lambda^2 \coeffdim \ln(2/\delta)}{\nsamp}}
\end{equation}
with probability at least $1 - \delta$.
\end{cor}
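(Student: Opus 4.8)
The plan is to follow the standard error decomposition for empirical risk minimization and then to invoke Theorem~\ref{thm:main_result_SC}. First I would insert the empirical averages around both the empirical minimizer $f^\star_{\Sig}$ and the population minimizer $f^\star$, writing
\begin{equation*}
\begin{split}
\mathcal{E}_\text{est} = {}&\bigl(\EE[f^\star_{\Sig}] - \hat{\EE}_{\Sig}[f^\star_{\Sig}]\bigr) + \bigl(\hat{\EE}_{\Sig}[f^\star_{\Sig}] - \hat{\EE}_{\Sig}[f^\star]\bigr)\\
&{}+ \bigl(\hat{\EE}_{\Sig}[f^\star] - \EE[f^\star]\bigr).
\end{split}
\end{equation*}
The middle term is non-positive because $f^\star_{\Sig}$ minimizes the empirical mean over $\Ffrak$. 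Each of the remaining two terms is bounded by the two-sided uniform deviation $\sup_{f \in \Ffrak} |\EE[f] - \hat{\EE}_{\Sig}[f]|$: the first directly, and the third after noting that $f^\star$ is a fixed, sample-independent element of $\Ffrak$ (it depends only on the unknown distribution $\PP$), so that $\hat{\EE}_{\Sig}[f^\star] - \EE[f^\star] \le \sup_{f\in\Ffrak}(\hat{\EE}_{\Sig}[f]-\EE[f])$. This yields $\mathcal{E}_\text{est} \le 2\sup_{f\in\Ffrak}|\EE[f]-\hat{\EE}_{\Sig}[f]|$.

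It then remains to control the two-sided quantity. Here I would apply Theorem~\ref{thm:main_result_SC} in the form given by Remark~\ref{nte:sc_absolute value}, i.e., with the function class replaced by $\Fcal \cup (-\Fcal)$, for which $\sup_{h \in \Fcal\cup(-\Fcal)}(\EE[h]-\hat{\EE}_{\Sig}[h]) = \sup_{f\in\Ffrak}|\EE[f]-\hat{\EE}_{\Sig}[f]|$. Since replacing $f$ by $-f$ changes neither the Lipschitz constant $\lambda$ of the sparsity term nor the bounded-difference constant $2\lambda\sqrt{\coeffdim}$ entering McDiarmid's inequality, and leaves the constraint set $\Cfrak$ — hence the Slepian bound on the empirical Gaussian complexity and the constant $C_\Cfrak$ — untouched, the very same bound
\begin{equation*}
\sqrt{2\pi}\,\frac{\lambda C_\Cfrak}{\sqrt{\nsamp}} + 3\sqrt{\frac{2\lambda^2\coeffdim\ln(2/\delta)}{\nsamp}}
\end{equation*}
holds for the two-sided deviation, with probability at least $1-\delta$. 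Multiplying by the factor $2$ obtained in the decomposition gives the claimed estimate, and because a single application of the extended theorem already controls both directions simultaneously, there is no need to split the confidence level and the probability remains $1-\delta$.

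The only genuinely delicate point is the two-sided control: Theorem~\ref{thm:main_result_SC} as stated bounds $\EE[f] - \hat{\EE}_{\Sig}[f]$ in one direction only, so one must be careful to route the ``wrong-direction'' term $\hat{\EE}_{\Sig}[f^\star] - \EE[f^\star]$ through Remark~\ref{nte:sc_absolute value} rather than through a naive second union bound, which would worsen either the constant or the confidence level. Everything else — the insertion of the empirical averages, the sign of the empirical-optimality term, and the invariance of all constants under negation — is routine.
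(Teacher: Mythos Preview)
Your proposal is correct and follows essentially the same route as the paper: insert the empirical averages, drop the non-positive term coming from the optimality of $f^\star_{\Sig}$, bound the two remaining terms by the uniform two-sided deviation, and invoke Theorem~\ref{thm:main_result_SC} together with Remark~\ref{nte:sc_absolute value}. The paper's proof is just a slightly more condensed version of the same decomposition, and your observation that a single application of the extended theorem suffices (so the probability stays at $1-\delta$) is exactly what the paper uses as well.
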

\begin{proof} The estimation error can be bounded via
\begin{equation*}\begin{split}
    \mathcal{E}_\text{est} &=  \EE[f^\star_{\Sig}] - \EE[f_{}^\star]
    \leq \EE[f^\star_{\Sig}] - \EE[f_{}^\star] - \hat{\EE}_{\Sig}[f^\star_{\Sig}] + \hat{\EE}_{\Sig}[f_{}^\star] \\
    {}&\leq |\EE[f^\star_{\Sig}] - \hat{\EE}_{\Sig}[f^\star_{\Sig}]| + |\hat{\EE}_{\Sig}[f_{}^\star] - \EE[f_{}^\star]|,
\end{split}\end{equation*}
where the first inequality holds since $f^\star_{\Sig}$ is the minimizer of $\hat{\EE}_{\Sig}$, and therefore $\hat{\EE}_{\Sig}[f^\star_{\Sig}] \leq \hat{\EE}_{\Sig}[_{}f^\star]$ and the final result follows from Theorem~\ref{thm:main_result_SC} and its subsequent remark.
\end{proof}

\subsection{Geometric Stochastic Gradient Descent}
\label{subsec_gsgd}
Ongoing from the seminal work of \cite{robbins1951}, SGD type optimization methods have attracted attention to solve large-scale machine learning problems \cite{bottou-lecun-2004, Mairal2010}. In contrast to full gradient methods that in each iteration require the computation of the gradient with respect to all the $\nsamp$ training samples $\Sig = [\sig_1,\ldots, \sig_{\nsamp}]$ in SGD the gradient computation only involves a small batch randomly drawn from the training set in order to find the $\OP \in \Cfrak$ which minimizes the expectation $\EE_{\sig \sim \PP}[f(\OP,\sig)]$. Accordingly, the cost of each iteration is independent of $\nsamp$ (assuming the cost of accessing each sample is independent of $\nsamp$). 

In the following we use the notation $\sig_{\mybr{k(i)}}$ to denote a signal batch of cardinality $\vert k(i) \vert$, where $k(i)$ represents an index set randomly drawn from $\{1,2,\ldots,\nsamp\}$ at iteration $i$.

In order to account for the constraint set $\Cfrak$ we follow \cite{bonnabel2013} and propose a geometric SGD optimization scheme. This requires some adaptions to classic SGD. 
The subsequent discussion provides a concise introduction to line search optimization methods on manifolds. For more insights into optimization on manifolds in general we refer the reader to \cite{absil2009} and to \cite{bonnabel2013} for optimization on manifolds using SGD in particular.

In Euclidean space the direction of steepest descent at a point $\OP$ is given by the negative (Euclidean) gradient. 
For optimization on an embedded manifold $\Cfrak$ this role is taken over by the negative Riemannian gradient, which is a projection of the gradient onto the respective tangent space $T_{\OP}$. To keep notation simple, we denote the Riemannian gradient w.r.t.\ $\OP$ at a point $(\OP,\sig)$ by $\mathbf{G}(\OP, \sig) = \Pi_{T_{\OP} \Cfrak} (\nabla_{\OP} f(\OP,\sig))$.
Optimization methods on manifolds find a new iteration point by searching along geodesics instead of following a straight path. We denote a geodesic emanating from point $\OP$ in direction $\mathbf{H}$ by $\Gamma(\OP,\mathbf{H},\cdot)$. Following this geodesic for distance $t$ then results in the new point $\Gamma(\OP,\mathbf{H},t) \in \Cfrak$.
Finally, an appropriate step size $t$ has to be computed. A detailed discussion on this topic is provided in Section \ref{sec:step_size}.

Using these definitions, an update step of geometric SGD reads as
\begin{align}
	\OP_{i+1} = \Gamma(\OP_{i},-{\mathbf{G}}(\OP_{i}, \sig_{\mybr{k(i)}}),t_{i}).
	\label{eq:sgd_update}
\end{align}
Since the SGD framework only provides a noisy estimate of the objective function in each iteration, a stopping criterion based on the average over previous iterations is chosen to terminate the optimization scheme. 
First, let $f(\OP_{i},\sig_{\mybr{k(i)}})$ denote the mean over all signals in the batch $\sig_{\mybr{k(i)}}$ associated to $\OP_{i}$ at iteration $i$. That is, $f(\OP_{i},\sig_{\mybr{k(i)}}) = \tfrac{1}{\vert k(i) \vert} \sum_{j=1}^{\vert k(i) \vert} f(\OP_{i},\sig_{j})$, for $\sig_{j} \in \sig_{\mybr{k(i)}}$. 
With the mean cost for a single batch at hand we are able to calculate the total average including all previous iterations. This reads as $\phi_{i} = \tfrac{1}{i} \sum_{j = 0}^{i-1} f(\OP_{i-j},\sig_{\mybr{k(i-j)}})$. Furthermore, let $\bar{\phi_{i}}$ denote the mean over the last $l$ values of $\phi_i$.
Finally, we are able to state our stopping criterion. The optimization terminates if the relative variation of $\phi_i$, which is denoted as
\begin{align}
    v = \big( \vert \phi_{i} - \bar{\phi_{i}} \vert \big) /  \bar{\phi_{i}},
    \label{eq:stop_citerion}
\end{align}
falls below a certain threshold $\delta$. In our implementation, we set $l = 200$ with a threshold $\delta = 5 \cdot 10^{-5}$.

\subsection{Step size selection}
\label{sec:step_size}
Regarding the convergence rate, a crucial factor of SGD optimization is the selection of the step size (often also referred to as learning rate). For convex problems, the step size is typically based on the Lipschitz continuity property. If the Lipschitz constant is not known in advance, an appropriate learning rate is often chosen by using approximation techniques. In \cite{Roux12astochastic}, the authors propose a basic line search that sequentially halves the step size if the current estimate does not minimize the cost. Other approaches involve some predefined heuristics to iteratively shrink the step size \cite{bottou-tricks-2012} which has the disadvantage of requiring the estimation of an additional hyper-parameter. 
We propose a more variable approach by proposing a variation of a backtracking line search algorithm adapted to SGD optimization.

As already stated in Section~\ref{sec:RuGC}, our goal is to find a set of separable filters such that the empirical sparsity over all $\nsamp$ samples from our training set is minimal. Now, recall that instead of computing the gradient with respect to the full training set, the SGD framework approximates the true gradient by means of a small signal batch or even a single signal sample. That is, the reduced computational complexity comes at the cost of updates that do not minimize the overall objective. However, it is assumed that on average the SGD updates approach the minimum of the optimization problem stated in \eqref{eq:def_sparsepen}, i.e., the empirical mean over all training samples. We utilize this proposition to automatically find an appropriate step size such that for the next iterate an averaging Armijo condition is fulfilled. 

To be precise, starting from an initial step size $a_i^{0}$ the step length $a_i$ is successively shrunk until the next iterate fulfills the Armijo condition. That is, we have
\begin{equation}\begin{split}
    \label{eq:lsearch}
    \bar{f}(\OP_{i+1},\sig_{\mybr{k(i)}}) \leq \bar{f}{}&(\OP_{i},\sig_{\mybr{k(i-1)}})\\
    {}&- a_i \cdot c \cdot \|{\mathbf{G}}(\OP_{i}, \sig_{\mybr{k(i)}}) \|_{F}^{2},
\end{split}\end{equation}
with some constant $c \in (0,1)$. Here, $\bar{f}$ denotes the average cost over a predefined number of previous iterations. The average is calculated over the function values $f(\OP_{i+1},\sig_{\mybr{k(i)}})$, i.e., over the cost of the optimized operator with respect to the respective signal batch. This is achieved via a sliding window implementation that reads
\begin{align}
    \bar{f}(\OP_{i+1},\sig_{\mybr{k(i)}}) = \tfrac{1}{w}  \sum_{j=0}^{w-1} f(\OP_{i+1-j},\sig_{\mybr{k(i-j)}}),
    \label{eq:averaging_window}
\end{align}
with $w$ denoting the window size.

If \eqref{eq:lsearch} is not fulfilled, i.e., if the average including the new sample is not at least as low as the previous average, the step size $a_i$ goes to zero. To avoid needless line search iterations we stop the execution after a predefined number of trials $k_{max}$ and proceed with the next sample without updating the filters and with resetting $a_{i+1}$ to its initial value $a_i^{0}$. The complete step size selection approach is summarized in Algorithm \ref{algo:ls}. In our experiments we set the parameters to $b = 0.9$ and $c = 10^{-4}$.
\newlength{\oldtextfloatsep}\setlength{\oldtextfloatsep}{\textfloatsep}
\begin{algorithm}[t]
\caption{SGD Backtracking Line Search}
\label{algo:ls}
\begin{algorithmic}
%\REQUIRE 
\STATE \hspace{-1em}{\textbf{Require:}} $a_i^{0} > 0$, $b \in (0,1)$, $c \in (0,1)$, $\OP_{i}$, \\ $\bar{f}(\OP_{i},\sig_{\mybr{k(i-1)}})$, ${\mathbf{G}}(\OP_{i}, \sig_{\mybr{k(i)}})$, $k_{max} = 40$
\STATE \hspace{-1em}{\textbf{Set:}} $a \leftarrow a_i^{0}$, $k \leftarrow 1$
\WHILE{$\bar{f}(\Gamma(\OP_{i},-{\mathbf{G}}(\OP_{i},\sig_{\mybr{k(i)}}),a),\sig_{\mybr{k(i)}}) >$ \\ $\bar{f}(\OP_{i},\sig_{\mybr{k(i-1)}}) - a \cdot  c \cdot  \|{\mathbf{G}}(\OP_{i}, \sig_{\mybr{k(i)}})\|_{F}^{2} \, \land \, k < k_{max}$}
\STATE $ a \leftarrow b \cdot a$
\STATE $ k \leftarrow k + 1$
\ENDWHILE
\STATE \hspace{-1em}{\textbf{Output:}} $t_{i} \leftarrow a$
\end{algorithmic}
\end{algorithm}

\subsection{Cost Function and Constraints}
An appropriate sparsity measure for our purposes is provided by
\begin{equation}
\label{eq:sparsity_measure}
	g(\coeff) \coloneqq \sum\nolimits_{j = 1}^{\coeffdim} \log\left( 1 + \nu \alpha_k^2 \right).
\end{equation}
This function serves as a smooth approximation to the $\ell_0$-quasi-norm, cf.~\cite{kiechle2014}, but other smooth sparsity promoting functions are also conceivable.

In the section on sample complexity we introduced the oblique manifold as a suitable constraint set. Additionally, there are two properties we wish to enforce on the learned operator as motivated in \cite{hawe:tip13}. (i) Full rank of the operator and (ii) No identical filters. This is achieved by incorporating two penalty functions into the cost function, namely
\begin{align*}
	h(\OP) &= - \tfrac{1}{\sigdim \log(\sigdim)}\log \det \left(\tfrac{1}{\coeffdim}(\OP)^\top \OP\right),\\
	r(\OP) &= - \sum_{k<l} \log \left( 1 - \left((\omegab_{k})^\top (\omegab_{l})\right)^2 \right).
\end{align*}
The function $h$ promotes (i) whereas $r$ enforces (ii).
Hence, The final optimization problem for a set of training samples $\sig_i$ (vectorized versions of signals $\Tsig_i$ in tensor form) is given as
\begin{equation}\begin{split}
	\arg\hspace{-0em}\min_{\OP}\ \tfrac{1}{\nsamp} &\sum_{j=1}^\nsamp f(\OP, \sig_j)\\
	\text{subject to }\quad &\OP = (\OP^{(1)}, \ldots, \OP^{(\tensordim)}),\\ 
	&\OP^{(i)} \in \OB(\coeffdim_i,\sigdim_i),\quad i=1,\ldots,\tensordim, \label{eq:AOL_problem}
\end{split}\end{equation}
with the function
\begin{equation}\begin{split}
\label{eq:costSample}
    f(\OP,\sig) = g\left(\iota(\OP) \sig \right)
	{}&+ \kappa h(\iota(\OP)) + \mu r(\iota(\OP)).
\end{split}\end{equation}
The parameters $\kappa$ and $\mu$ are weights that control the impact of the full rank and incoherence condition. With this formulation of the optimization problem both separable as well as non-separable learning can be handled with the same cost function allowing for a direct comparison of these scenarios.

\section{Experiments}
\label{sec:exp}
The purpose of the experiments presented in this section is, on the one side, to give some numerical evidence of the sample complexity results from Section \ref{sec:sc}, and on the other side,
to demonstrate the efficiency and performance of our proposed learning approach from Section \ref{sec:stochastic}.

\subsection{Learning from natural image patches}
\label{sec:learningfrompatches}

The task of our first experiment is to demonstrate that separable filters can be learned from less training samples compared to learning a set of unstructured filters. We generated a training set that consists of $\nsamp = 500\,000$ two-dimensional normalized samples of size $\Sig_{i} \in \RR^{7 \times 7}$ extracted at random from natural images. 
The learning algorithm then provides two operators $\OP^{(1)},\OP^{(2)} \in \RR^{8 \times 7}$ resulting in $64$ separable filters. We compare our proposed separable approach with a version of the same algorithm, that does not enforce a separable structure on the filters and thus outputs a non-separable analysis operator $\OP \in \RR^{64 \times 49}$.

The weighting parameters for the constraints in \eqref{eq:costSample} are set to $\kappa = 6500$, $\mu = 0.0001$. The factor that controls the slope in the sparsity measure defined in \eqref{eq:sparsity_measure} is $\nu = 500$. At each iteration of the SGD optimization, a batch of $500$ samples is processed. The averaging window size in \eqref{eq:averaging_window} for the line search is fixed to $w = 2000$. In all our experiments we start learning from random filter initializations. 

To visualize the efficiency of the separable learning approach the averaged function value at iteration $i$ as defined in \eqref{eq:averaging_window} is plotted in Figure~\ref{fig:learning}. 
While the dotted curve corresponds to the learning framework that does not enforce a separable structure on the filters, the solid graph visualizes the cost with separability constraint. The improvement in efficiency is twofold. First, imposing separability leads to a faster convergence to the empirical mean of the cost in the beginning of the optimization. Second, the optimization terminates after fewer iterations, i.e., less training samples are processed until no further update of the filters is observed. 
In order to offer an idea of the learned structures, the separable and non-separable filters obtained via our learning algorithm are shown in Figure~\ref{fig:both_operators} as $7 \times 7$ 2D-filter kernels.

\setlength{\textfloatsep}{\oldtextfloatsep}
\begin{figure}[t]
\centering
\includegraphics[width=\columnwidth]{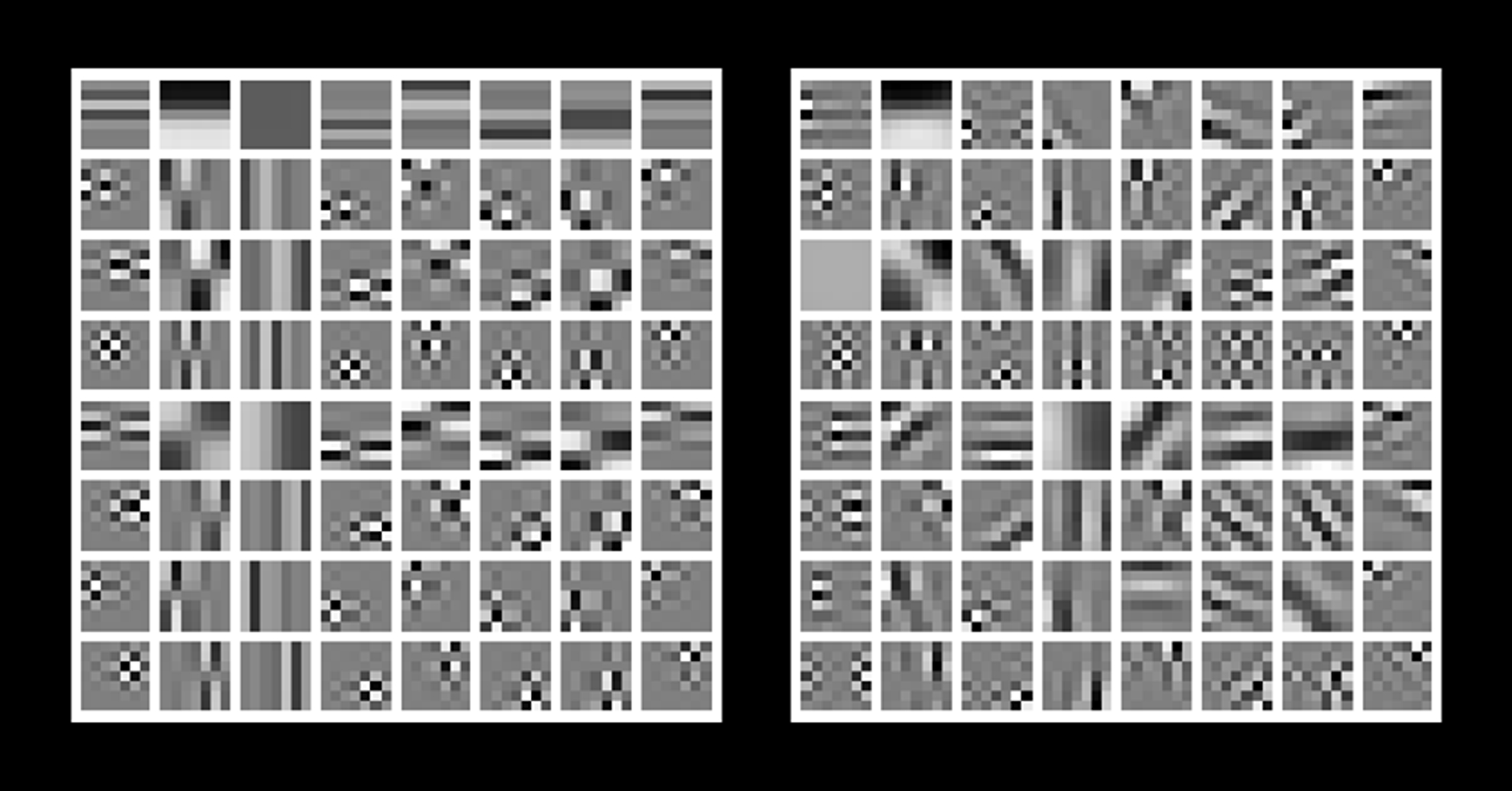}
\caption{Left: Learned filters with separable structure. Right: Result after learning the filters without separability constraint.}
\label{fig:both_operators}
\end{figure}

\begin{figure}[t]
\centering
\includegraphics[width=\columnwidth]{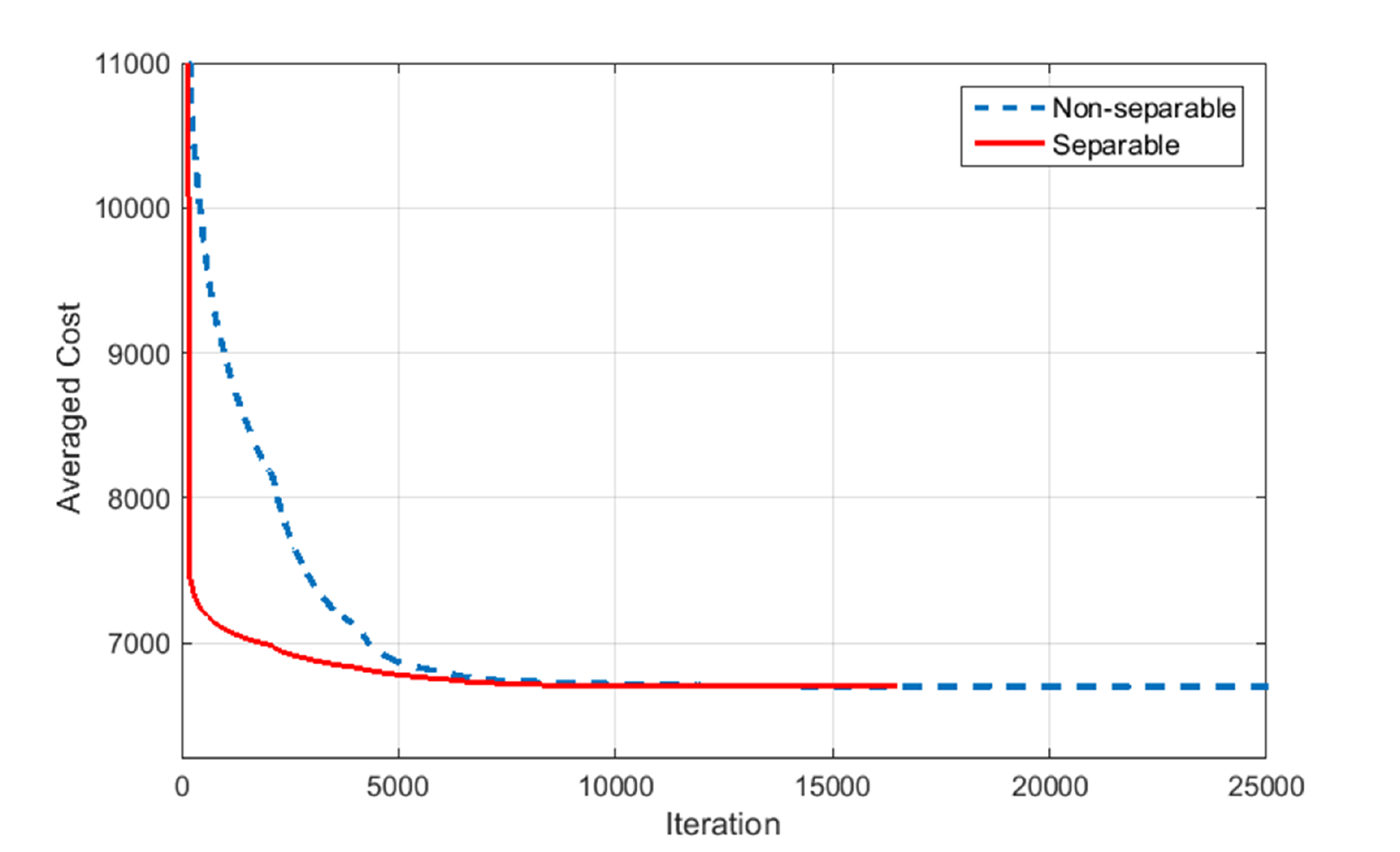}
\caption{Convergence comparison between the SGD based learning framework with and without separability constraint imposed on the filters. The dotted line denotes the averaged cost for the non-separable case. The solid graph indicates the averaged cost when a separable structure is enforced on the filters.}
\label{fig:learning}
\end{figure}

\subsection{Analysis operator recovery from synthetic data}
As mentioned in the introduction, there are many application scenarios where a learned analysis operator can be employed, ranging from inverse problems in imaging, to registration, segmentation and classification tasks. Therefore, in order to provide a task independent evaluation of the proposed learning algorithm, we have conducted experiments that are based on synthetic data and investigated how well a learned operator $\OP_{\text{learned}}$ approximates 
a ground truth operator $\OP_{\text{GT}}$. 
When measuring the accuracy of the recovery we have to take into account that there is an inherent sign and permutation ambiguity in the learned filters.
Hence, we consider the absolute values of the correlation of the filters over all possible permutations.

To be precise, let us denote $\tilde{\omegab}_{i}$ as the $i^{\mathrm{th}}$-row of $\OP_{\text{learned}}$ and $\omegab_{j}$ as the $j^{\mathrm{th}}$-row of $\OP_{\text{GT}}$, both represented as column vectors. We define the deviation of these filters from each other as $c_{ij} = 1 - \vert \tilde{\omegab}_{i}^{\top} \omegab_{j}  \vert$. Doing this for all possible combinations of $i$ and $j$ we obtain the confusion matrix  ${\mathbf{C}}$, where the $(i,j)$-entry $\mathbf{C}$ is $0$ if $\tilde{\omegab}_{i}$ is equal to $\omegab_{j}$. Building the confusion matrix accounts for the permutation ambiguity between $\OP_{\text{GT}}$ and $\OP_{\text{learned}}$.
Next, we utilize the Hungarian-method \cite{kuhn1955} to determine the path through the confusion matrix ${\mathbf{C}}$ with the lowest accumulated cost under the constraint that each row and each column is visited only once. 
In the end, the coefficients along the path are accumulated and this sum serves as our error measure denoted as $H({\mathbf{C}})$.
In other words, we aim to find the lowest sum of entries in $\mathbf{C}$ such that in each line a single entry is picked and no column is used twice.
With this strategy we prevent that multiple retrieved filters $\tilde{\omegab}_{i}$ are matched to the same filter $\omegab_{j}$, i.e., the error measure $H({\mathbf{C}})$ is zero if and only if all filters in $\OP_{\text{GT}}$ are recovered.

Following the procedure in \cite{Nam201330}, we generated a synthetic set of samples of size $\Sig_{i} \in \RR^{7 \times 7}$ w.r.t.\ to $\OP_{\text{GT}}$. As the ground truth operator we chose the separable operator obtained in the previous Subsection~\ref{sec:learningfrompatches}.
The generated signals exhibit a predefined co-sparsity after applying the ground truth filters to them. The set of samples has the size $\nsamp = 500\,000$. The co-sparsity, i.e., the number of zero filter responses is fixed to $15$. Additive white Gaussian noise with standard deviation $0.05$ is added to each normalized signal sample. We now aim at retrieving the underlying original operator that was used to generate the signals. Again, we compare our separable approach against the same framework without the separability constraint. 

In order to compare the performance of the proposed SGD algorithm in the separable and non-separable case, we conduct an experiment where the size of the training sample batch that is used for the gradient and cost calculation is varied.
The employed batch sizes are $\{1,10,25,50,75,100,250,500,1000\}$, while the performance is evaluated over ten trials, i.e., ten different synthetic sets that have been generated in advance. 

Figure~\ref{fig:SampleComplexityCosp15} summarizes the results for this experiment. For each batch size the error over all 10 trials is illustrated. The left box corresponds to the separable approach and accordingly the right box denotes the error for the non-separable filters. While the horizontal dash inside the boxes indicates the median over all 10 trials the boxes represent the mid-$50\%$. The dotted dashes above and below the boxes indicate the maximum and minimum error obtained.

It is evident that the separable operator learning algorithm achieves better recovery of the ground truth operator for smaller batch sizes, which indicates that it requires less samples in order to produce good recovery results.
Table~\ref{tab:iterations} shows the average number of iterations until convergence and the averaged error over all trials. As can be seen from the table, the separable approach requires less samples to achieve good accuracy, has a faster convergence and a smaller recovery error compared to the non-separable method. 
As an example, the progress of the recovery error for a batch size of $500$ is plotted in Figure~\ref{fig:OpRecError}. 

Finally, in order to show the efficiency of the SGD-type optimization we compare our method to an operator learning framework that utilizes a full gradient computation at each iteration. We chose the algorithm proposed in \cite{hawe:tip13} which learns a non-separable set of filters via a geometric conjugate gradient on manifolds approach. Again, we generated ten sets of $\nsamp = 500\,000$ samples with a predefined cosparsity. Based on this synthetic set of signals, we measured the mean computation time and number of iterations until the stopping criterion from Section~\ref{subsec_gsgd} is fulfilled. Table \ref{tab:sgdvscg} summarizes the results for the proposed separable SGD, the non-separable SGD and the non-separable CG implementations. While the CG based optimization converges after only a few iterations, the overall execution time is worse compared to SGD due to the high computational cost for each full gradient calculation. 

Figure~\ref{fig:OpRecError} and Table~\ref{tab:sgdvscg} in particular support the theoretical results obtained for the sample complexity. They illustrate that the separable SGD algorithm requires less iterations, and therefore fewer samples, to reach the dropout criterion compared to the SGD algorithm that does not enforce separability of the learned operator.

\begin{table}[t]
\small
\renewcommand{\arraystretch}{1.3}
\caption{Comparison between the convergence speed and average error for the filter recovery experiment. For each batch size the average number of iterations until convergence is given along with the average error which denotes the mean of the values for $H({\mathbf{C}})$ over all the ten trials. Upper part: Learning separable filters. Lower part: Learning non-Separable filters.}
\label{tab:iterations}
%\vspace{.8em}
\centering
\tiny
\begin{tabular}{c||c|c|c|c|c|c|c|c|c}

\hline
	     	& 1	& 10 & 25 &	50 & 75 & 100 &	250	& 500 &	1000 \\
\hline
										
Iterations	&	450	  & 422	   & 2573	& 3721	& 4595 & 5780 &	9673  &	13637 & 15198 \\
Avg. error	&	37.22 &	33.28  & 1.39	& 1.08	& 0.75 & 0.56 &	0.31  & 0.24  & 0.14 \\
\hline
\hline
Iterations	&	1812  & 1800 &	2786  &	3866  &	10147 &	13595 &	19679 &	21087 &	20611  \\
Avg. error	&	41.21 & 40.73 &	38.68 &	27.74 &	9.67  &	1.89  &	1.38  &	1.23 &	1.14  \\

\hline
\end{tabular}
\vspace{-3pt}
\end{table}

\begin{figure}[t]
%\hspace{-0.3cm}
\centering
\includegraphics[width=\columnwidth]{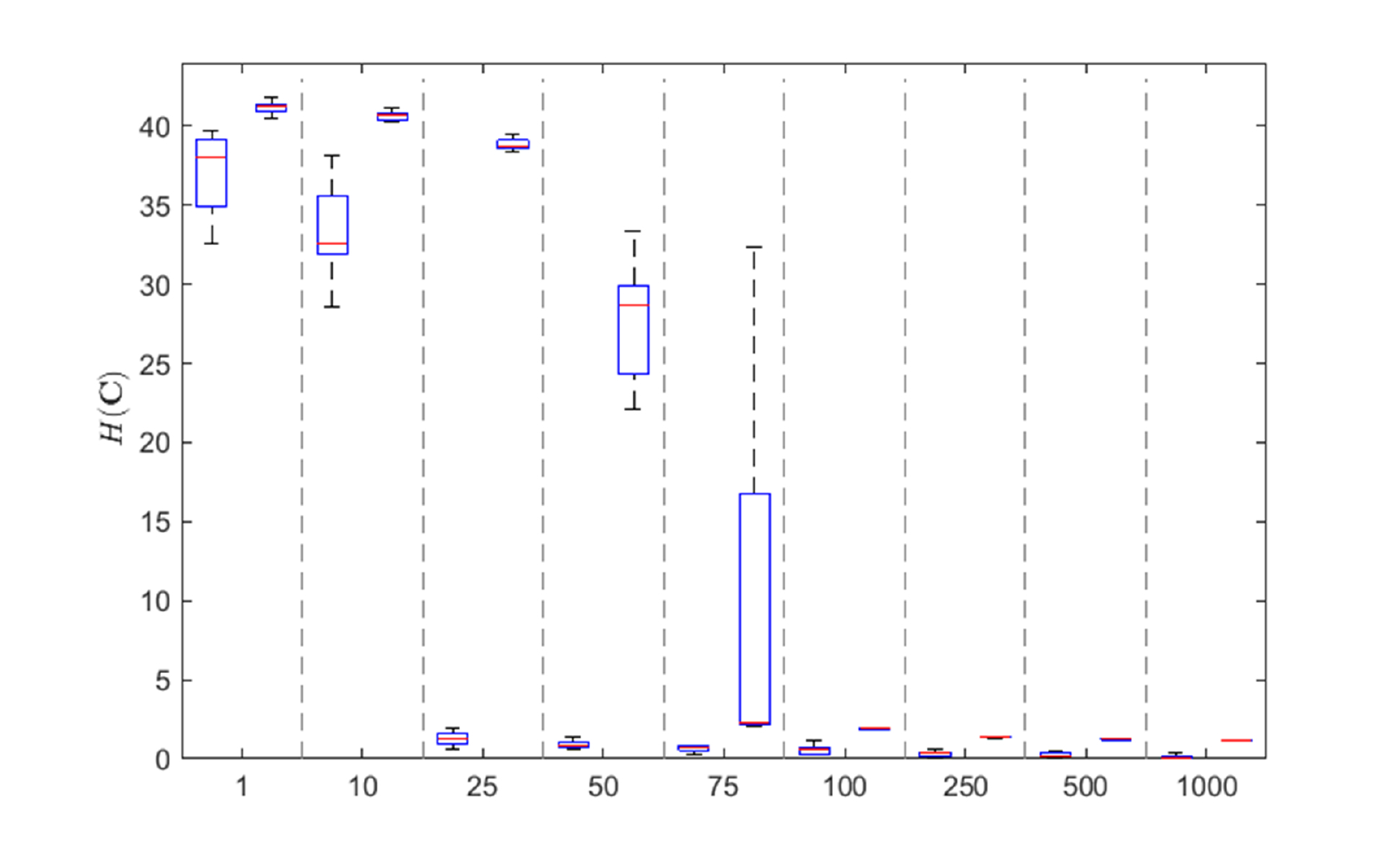}
\caption{%Sample complexity experiment for SGD optimization with different training batch sizes. 
The horizontal axis indicates the batch size. For each size, on the left the recovery error for the separable case is plotted, whereas the error for the non-separable case is plotted on the right. All generated signals exhibit a co-sparsity of 15 and for each batch size the error is evaluated over 10 trials. The horizontal dash inside the boxes indicates the median error over all trials. The boxes represent the central 50\% of the errors obtained, while the dashed lines above and below the boxes indicate maximal and minimal errors.}
\label{fig:SampleComplexityCosp15}
\end{figure}

\begin{figure}[t]
\centering
\includegraphics[width=\columnwidth]{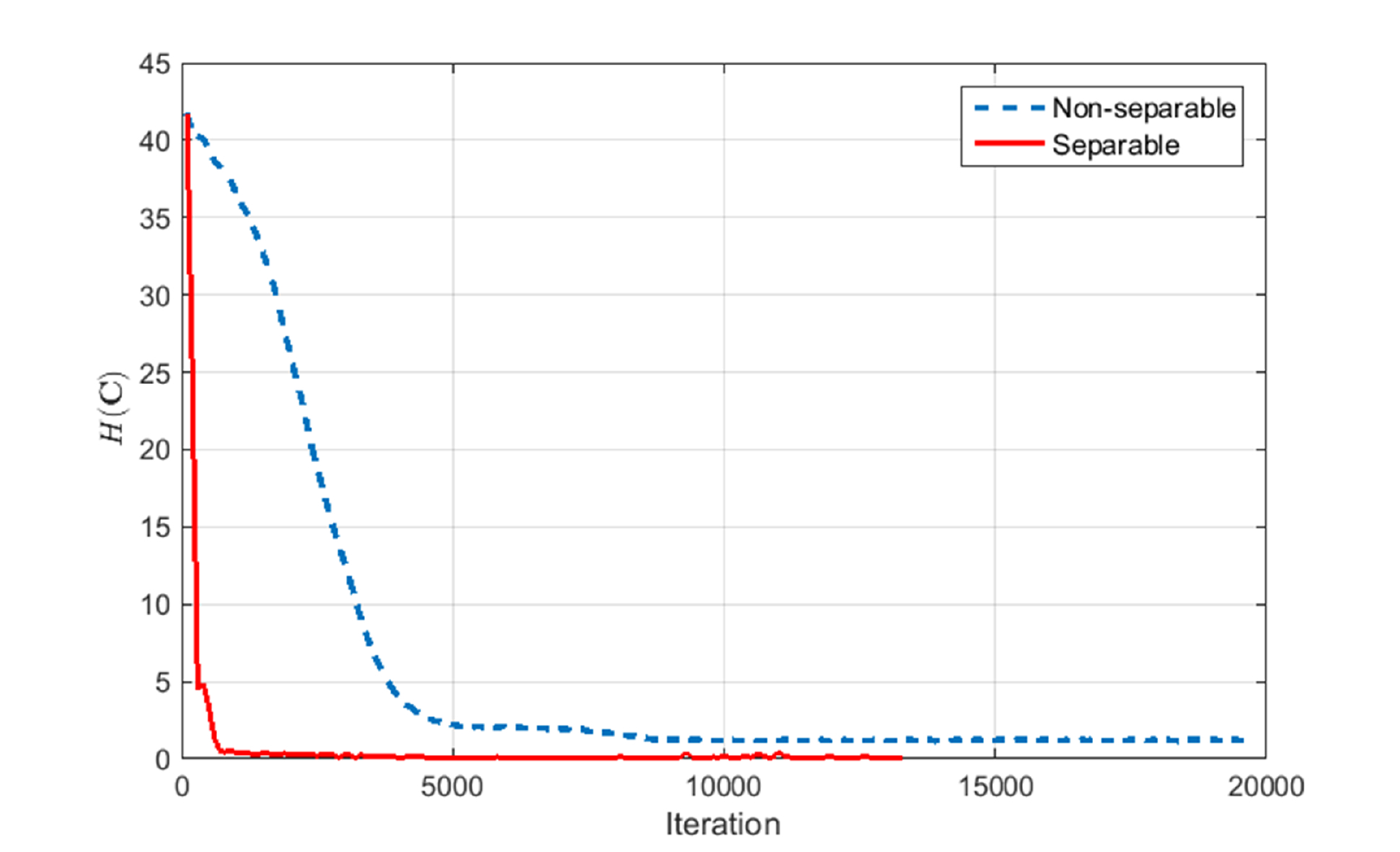}
\caption{Operator recovery error for a batch size of $500$. The dotted graph indicates the progress over the iterations for the non-separable case, while the solid curve shows the progress for retrieving separable filters.}
\label{fig:OpRecError}
\end{figure}

\begin{table}[t]
\footnotesize
\renewcommand{\arraystretch}{1.3}
\caption{Comparison between the SGD optimization and conjugate gradient optimization. Average number of iterations and processing times over ten trials.}
\label{tab:sgdvscg}
%\vspace{.8em}
\centering
%\tiny
\begin{tabular}{c||c|c|c}

\hline
	     	& Iterations & time in sec & $H(\mathbf{C})$  \\
\hline
SGD (separable)      & 12358   & 1176   & 0.48   \\
SGD (non-separable)	    & 21660    & 1554    & 1.24  \\
CG (GOAL \cite{hawe:tip13}) & 601    & 2759    & 1.01\\
\hline

\end{tabular}
\vspace{-3pt}
\end{table}

\subsection{Comparison with related approaches on image data} 

In order to show that the operator learned with separable structures is applicable to real world signal processing tasks, we have conducted a simple image denoising experiment. 
Rather than outperforming existing denoising algorithms, the message conveyed by this experiment is that using separable filters only slightly reduces the reconstruction performance. 
We compare our separable operator (sepSGD) against other learning schemes that provide a set of filters without imposing a separability constraint. 
Specifically, in addition to our non-separable SGD implementation (SGD), we have chosen the geometric analysis operator learning scheme (GOAL) from \cite{hawe:tip13}, the Analysis-KSVD algorithm (AKSVD) proposed in \cite{rubinstein2013analysis}, and the method presented in \cite{yaghoobi2013constrained} (CAOL) for comparison.
All operators are learned from $\nsamp = 500\,000$ patches of size $7 \times 7$ extracted from eight different standard natural training images that are not included in the test set. All operators are of size $64 \times 49$. For sepSGD, SGD and GOAL we have set the parameters to the same values as already stated in \ref{sec:learningfrompatches}. The parameters of the AKSVD and CAOL learning algorithms have been tuned such that the overall denoising performance is best for the chosen test images.
Four standard test images (Barbara, Couple, Lena, and Man), each of size $512 \times 512$ pixels, have been artificially corrupted with additive white Gaussian noise with standard deviation $\sigma_n \in \{10,20,30\}$.

The learned operators are used as regularizers in the denoising task which is formulated as an inverse problem.
We have utilized the NESTA algorithm \cite{becker:siam2011} which solves the analysis-based unconstrained inverse problem
\begin{equation*}
 {\mathbf{x}}^{\star} \in \arg\min_{{\mathbf{x}} \in \RR^{n}} \tau \| \OP^{\ast}({\mathbf{x}}) \|_{1} + \tfrac{1}{2} \| {\mathbf{y}} - {\mathbf{x}} \|_{2}^{2},
\end{equation*}
where ${\mathbf{x}}$ represents a vectorized image, ${\mathbf{y}}$ are the noisy measurements, $\tau$ is a weighting factor, and $\OP^{\ast}({\mathbf{x}})$ denotes the operation of applying the operator $\OP^{\ast}$ to all overlapping patches of the image ${\mathbf{x}}$. This is done by applying each of the learned filters to the patches via convolution.
For all operators, the weighting factor is set to $\tau \in \{0.18, 0.40, 0.60\}$ for the noise levels $\sigma_n \in \{10,20,30\}$, respectively.
Table \ref{tab:denoising} summarizes the results of this experiment. 

The presented results indicate that using separable filters does not reduce the image restoration performance to a great extent and that separable filters are competitive with non-separable ones. Furthermore, the SGD update has the advantage that the execution time of a single iteration in the learning phase does not grow with the size of the training set which is an important issue for extensive training set dimensions or online learning scenarios.
Please note that we have not optimized the parameters of our learning scheme for the particular task of image denoising.

\begin{table}[t]
\footnotesize
\renewcommand{\arraystretch}{1.3}
\caption{Denoising experiment for four different test images corrupted by three noise levels. Achieved PSNR in decibels (dB).}
\label{tab:denoising}
\centering
\begin{tabular}{c|c||c|c|c|c}

\hline
$\sigma_n$ / PSNR  &     & Barbara & Couple  & Lena    & Man      \\
\hline
\hline 
              % & sepSGD  & 31.98   & 32.58   & 33.65   & 32.52    \\ 
              & sepSGD  & 32.13   & 32.76   & 34.10   & 32.71    \\ 
              & SGD     & 31.82   & 32.43   & 33.98   & 32.64    \\ 
10 / 28.13    & GOAL    & 32.28   & 32.62   & 34.29   & 32.80    \\
              & AKSVD   & 31.75   & 31.69   & 33.51   & 31.97    \\
              & CAOL    & 30.44   & 30.35   & 31.41   & 30.72    \\
%\cline{2-6}
%               & BM3D    & 34.98   & 34.04   & 35.93   & 33.98    \\ 
\hline
              % & sepSGD  &  27.71   & 28.76   & 29.90   & 28.79    \\
              & sepSGD  & 27.89   & 28.97   & 30.46   & 29.00    \\ 
              & SGD     & 27.61   & 28.69   & 30.36   & 28.97    \\ 
20 / 22.11    & GOAL    & 28.01   & 28.86   & 30.65   & 29.12    \\
              & AKSVD   & 27.49   & 27.68   & 29.85   & 28.22    \\
              & CAOL    & 26.05   & 26.23   & 27.27   & 26.63    \\
%\cline{2-6}
%               & BM3D    & 31.78   & 30.76   & 33.05   & 30.59    \\
               
\hline
              % & sepSGD  & 25.44   & 26.54   & 27.58   & 26.70    \\
              & sepSGD  & 25.64   & 26.83   & 28.24   & 27.02    \\  
              & SGD     & 25.43   & 26.63   & 28.22   & 27.02    \\ 
30 / 18.59    & GOAL    & 25.75   & 26.76   & 28.40   & 27.12    \\
              & AKSVD   & 25.37   & 25.66   & 27.75   & 26.33    \\
              & CAOL    & 23.72   & 24.00   & 24.89   & 24.35    \\
%\cline{2-6}
%               & BM3D    & 29.81   & 28.87   & 31.26   & 28.86    \\
\hline

\end{tabular}
\vspace{-3pt}
\end{table}

\section{Conclusion}
We proposed a sample complexity result for analysis operator learning for signal distributions within the unit $\ell_2$-ball, where we have assumed that the sparsity promoting function fulfills a Lipschitz condition.
Rademacher complexity and McDiarmid's inequality were utilized to prove that the deviation of the empirical co-sparsity of a training set and the expected co-sparsity is bounded by $\mathcal{O}(C / \sqrt{\nsamp})$ with high probability, where $\nsamp$ denotes the number of samples and $C$ is a constant that among other factors depends on the (separable) structure imposed on the analysis operator during the learning process.
Furthermore, we suggested a geometric stochastic gradient descent algorithm that allows to incorporate the separability constraint during the learning phase. An important aspect of this algorithm is the line search strategy which we designed in such a way that it fulfills an averaging Armijo condition.
Our theoretical results and our experiments confirmed that learning algorithms benefit from the added structure present in separable operators in the sense that fewer training samples are required in order for the training phase to provide an operator that offers good performance. 
Compared to other co-sparse analysis operator learning methods that rely on updating the cost function with respect to a full set of training samples in each iteration, our proposed method benefits from a dramatically reduced training time, a common property among SGD methods. This characteristic further endorses the choice of SGD methods for co-sparse analysis operator learning.

\appendix
\newcommand{\sigt}{\mathbf{\tilde{s}}}
\newcommand{\Sigt}{\mathbf{\tilde{S}}}

\section*{Addition to proof of Lemma~\ref{lem:dist_with_Gauss}:}
In order to upper bound the expectation of $\Phi(\Sig)$ in \eqref{eq:lem_7_proof_1}, we follow a common strategy which we outline in the following for the convenience of the reader. First, we introduce a set of ghost samples $\Sigt = [\sigt_1, \ldots, \sigt_{\nsamp}]$ where all samples are drawn independently according to the same distribution as the samples in $\Sig$. For this setting the equations $\EE_{\Sigt}[\hat{\EE}_{\Sigt} [f]] = \EE[f]$ and $\EE_{\Sigt}[\hat{\EE}_{\Sig} [f]] = \hat{\EE}_{\Sig}[f]$ hold. Using this, we deduce
\begin{align}
    \EE&_\Sig[\Phi(\Sig)] 
    %= \EE_{\Sig} [\sup_{f \in \Ffrak} (\EE[f] - \hat{\EE}_{\Sig}[f])] \nonumber\\
    %= \EE_{\Sig} \bigg[\sup_{f \in \Ffrak} \EE_{\Sigt} [\hat{\EE}_{\Sigt} [f] - \hat{\EE}_{\Sig}[f] \bigg] \nonumber\\
    = \EE_{\Sig} \bigg[\sup_{f \in \Ffrak} \EE_{\Sigt} [\tfrac{1}{\nsamp} \sum\nolimits_{i}(f(\OP, \sigt_i) - f(\OP, \sig_i))] \bigg] \nonumber\\
    &\leq \EE_{\Sig, \Sigt} \bigg[\sup_{f \in \Ffrak} \tfrac{1}{\nsamp}\sum\nolimits_{i}(f(\OP, \sigt_i) - f(\OP, \sig_i)) \bigg] \label{eq:app_jens}\\
    &= \EE_{\sigma,\Sig, \Sigt} \bigg[\sup_{f \in \Ffrak} \tfrac{1}{\nsamp}\sum\nolimits_{i}\sigma_i(f(\OP, \sigt_i) - f(\OP, \sig_i)) \bigg] \label{eq:app_expsig}\\
    %&\leq \EE_{\sigma,\Sigt} [\sup_{f \in \Ffrak} \tfrac{1}{\nsamp}\sum\nolimits_{i}\sigma_i(f(\OP, \sigt_i)] \label{eq:app_negsig}\\
    %&\quad+ \EE_{\sigma,\Sig} [\sup_{f \in \Ffrak} \tfrac{1}{\nsamp}\sum\nolimits_{i}\sigma_i [f(\OP, \sig_i))] \nonumber\\
    &\leq 2\Rad{\Ffrak}. \nonumber
\end{align}
Here, the inequality \eqref{eq:app_jens} holds because of the convexity of the supremum and by application of Jensen's inequality, \eqref{eq:app_expsig} is true since $\EE[\sigma_i] = 0$ and the last inequality follows from the definition of the supremum and using the fact that negating a Rademacher variable does not change its distribution.

The next step is to bound the Rademacher complexity by the empirical Rademacher complexity. To achieve this, note that $\empRad{\Ffrak}$, like $\Phi$, fulfills the condition for McDiarmid's theorem with factor $2\lambda\sqrt{\coeffdim}/\nsamp$. This leads to the final result.

% you can choose not to have a title for an appendix
% if you want by leaving the argument blank
%\section{}
%Appendix two text goes here.

% use section* for acknowledgement
\section*{Acknowledgment}

This work was supported by the German Research Foundation (DFG) under grant KL 2189/8-1. The contribution of Remi Gribonval was supported in part by the European Research Council, PLEASE project (ERC-StG-2011-277906).

% Can use something like this to put references on a page
% by themselves when using endfloat and the captionsoff option.
\ifCLASSOPTIONcaptionsoff
  \newpage
\fi

\bibliographystyle{IEEEtran}
\bibliography{references}

\end{document}